\newcommand{\cmark}{\checkmark}%
\newcommand{\xmark}{\ding{55}}%
\theoremstyle{plain}
\newtheorem{theorem}{Theorem}[section]
\newtheorem{proposition}[theorem]{Proposition}
\newtheorem{corollary}[theorem]{Corollary}
\theoremstyle{definition}
\newtheorem{definition}[theorem]{Definition}
\theoremstyle{remark}
\newtheorem{remark}[theorem]{Remark}
\DeclareMathSymbol{\mh}{\mathord}{operators}{`\-}
\icmltitlerunning{DISCO: Mitigating Bias in Deep Learning with Conditional Distance Correlation}
\begin{document}

\twocolumn[
  \icmltitle{DISCO: Mitigating Bias in Deep Learning with Conditional Distance Correlation}

  \icmlsetsymbol{equal}{*}

  \begin{icmlauthorlist}
    \icmlauthor{Emre Kavak}{rad,relai,mcml}
    \icmlauthor{Tom Nuno Wolf}{rad,mcml}
    \icmlauthor{Christian Wachinger}{rad,relai,mcml}
  \end{icmlauthorlist}

  \icmlaffiliation{rad}{Technical University of Munich, Germany}
  \icmlaffiliation{relai}{Konrad Zuse School of Excellence in Reliable AI, Germany}
  \icmlaffiliation{mcml}{Munich Center for Machine Learning (MCML), Germany}

  \icmlcorrespondingauthor{Emre Kavak}{emre.kavak@tum.de}

  \icmlkeywords{Causality, Bias Mitigation, Shortcut Learning}

  \vskip 0.3in
]

\printAffiliationsAndNotice{}  %

\begin{abstract}
  Dataset bias often leads deep learning models to exploit spurious correlations instead of task-relevant signals. We introduce the Standard Anti-Causal Model (SAM), a unifying causal framework that characterizes bias mechanisms and yields a conditional independence criterion for causal stability. Building on this theory, we propose DISCO$_m$ and sDISCO, efficient and scalable estimators of conditional distance correlation that enable independence regularization in gradient-based models. Across six diverse datasets, our methods consistently outperform or are competitive in existing observed bias mitigation approaches, while requiring fewer hyperparameters and scaling seamlessly to multi-bias scenarios. This work bridges causal theory and practical deep learning, providing both a principled foundation and effective tools for robust prediction. Source Code: \url{https://github.com/yakamoz5/DISCO}.
\end{abstract}

\section{Introduction}

Dataset bias poses a persistent challenge for modern (deep) learning methods \citep{jones2023no}, as it can induce spurious shortcuts that fail to generalize or obscure the task-relevant signal \citep{geirhos2020shortcut}. To systematically understand and address these mechanisms, we operate within an anti-causal prediction setting. In this framework, the target variable of interest, $Y$, is assumed to causally generate the input data $X$ (e.g., an image or tabular data). While our model utilizes $X$ as the input to predict $Y$, the underlying causal flow is $Y \rightarrow X$. Within this setting, several distinct causal pathways can introduce bias.

In Alzheimer’s disease prediction, for example, age often acts as a confounder in a fork structure (see Fig.~\ref{fig:confounder_ex}, where the disease is $Y$ and age is the confounder $B_c$). If left unaddressed, the prediction system may rely predominantly on age-related features rather than identifying actual disease markers or their interaction effects \citep{zhao2020training}. Another widespread phenomenon is collider bias (often referred to as sampling bias in this context), illustrated in Fig.~\ref{fig:collider_ex}. Collider bias typically arises implicitly through data collection and creates spurious associations between variables that are otherwise causally unrelated. For instance, hospitalization can function as a hidden collider ($B_{col}$) that artificially connects risk factors with diseases ($Y$) without a true underlying mechanism \citep{griffith2020collider}.

A third source of prediction instability arises from mediator variables (Fig.~\ref{fig:mediator_ex}). The causal inference literature typically decomposes causal influence into direct and indirect effects (or mediated effects) \citep{shpitser2011complete,pearl2012causal}. However, in our anti-causal prediction setup (see Fig. \ref{fig:SAM_graph}), the effect of the target $Y$ on the model output $\hat{Y}$ is inherently mediated by the input features $X$. To avoid terminological ambiguity with standard direct effects, we define the robust, task-relevant signal passing specifically through the $Y \rightarrow X \rightarrow \hat{Y}$ pathway as the \textit{counterfactually stable effect} (ctf-stable). We distinguish this from unwanted shortcuts, which we isolate as spurious indirect effects ($IE$). In many practical applications, mediators carry highly useful and relevant information for the task, especially since they are directed. However, for the purpose of our theoretical analysis, and in specific settings where these intermediate variables represent unstable or irrelevant shortcuts, we treat them as biases that must be controlled (see Section \ref{sec:assumptions} for a detailed discussion and exceptions).

To formalize the desired behavior of our model against these biases, we introduce the concept of \textit{causal stability}. Intuitively, a predictor achieves causal stability when its outputs remain robust against counterfactual changes to known bias attributes. By properly regularizing the model, we ensure that it isolates and relies strictly on the robust, causally stable pathways (ctf-stable), effectively stripping away fluctuating spurious associations and unwanted mediated shortcuts.

\begin{figure}[ht]
\centering
    \begin{subfigure}[b]{0.15\textwidth}
        \centering
        \includegraphics[scale=0.04]{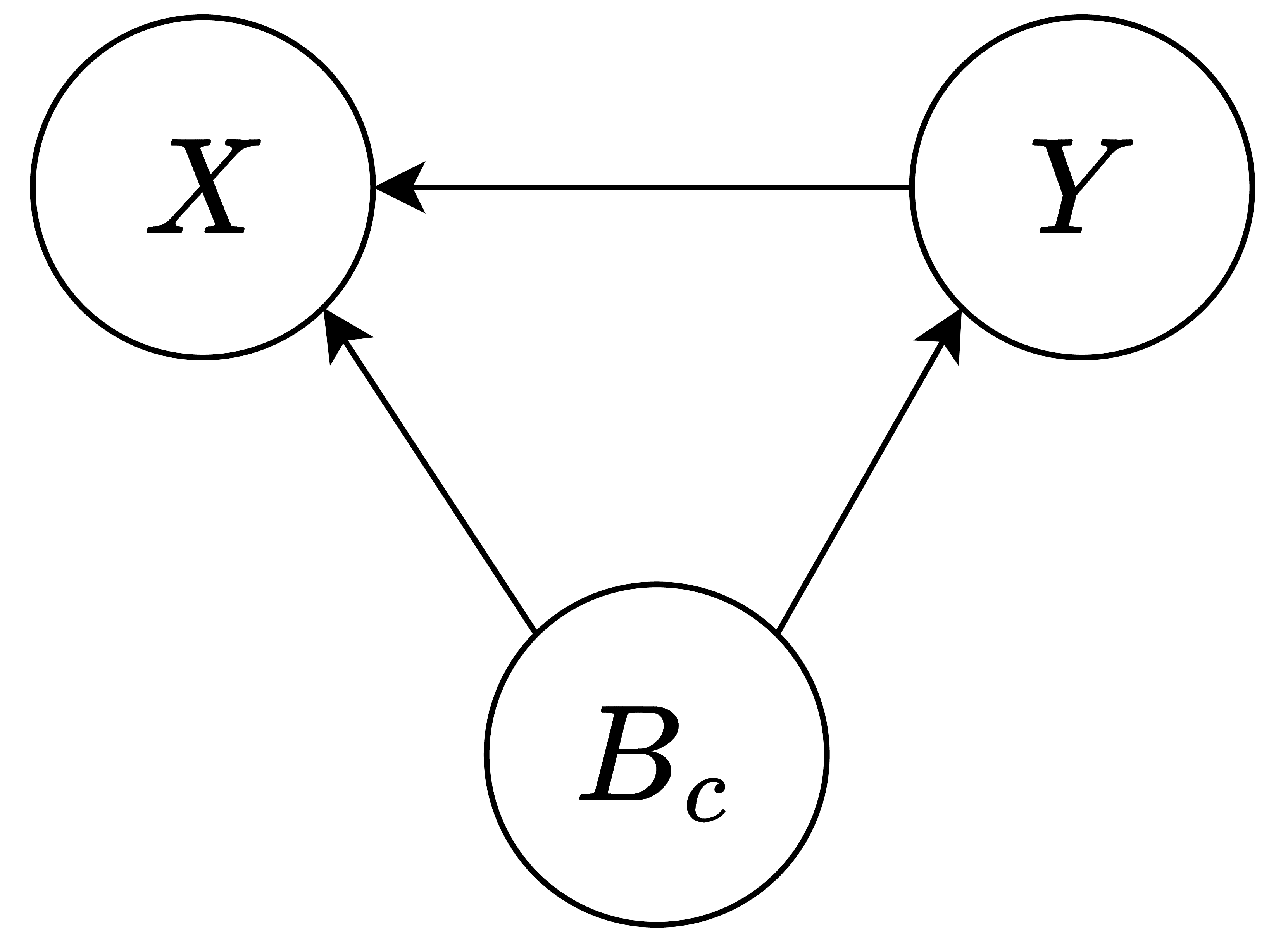}
        \caption{Confounder}
        \label{fig:confounder_ex}
    \end{subfigure}
    \begin{subfigure}[b]{0.15\textwidth}
        \centering
        \includegraphics[scale=0.04]{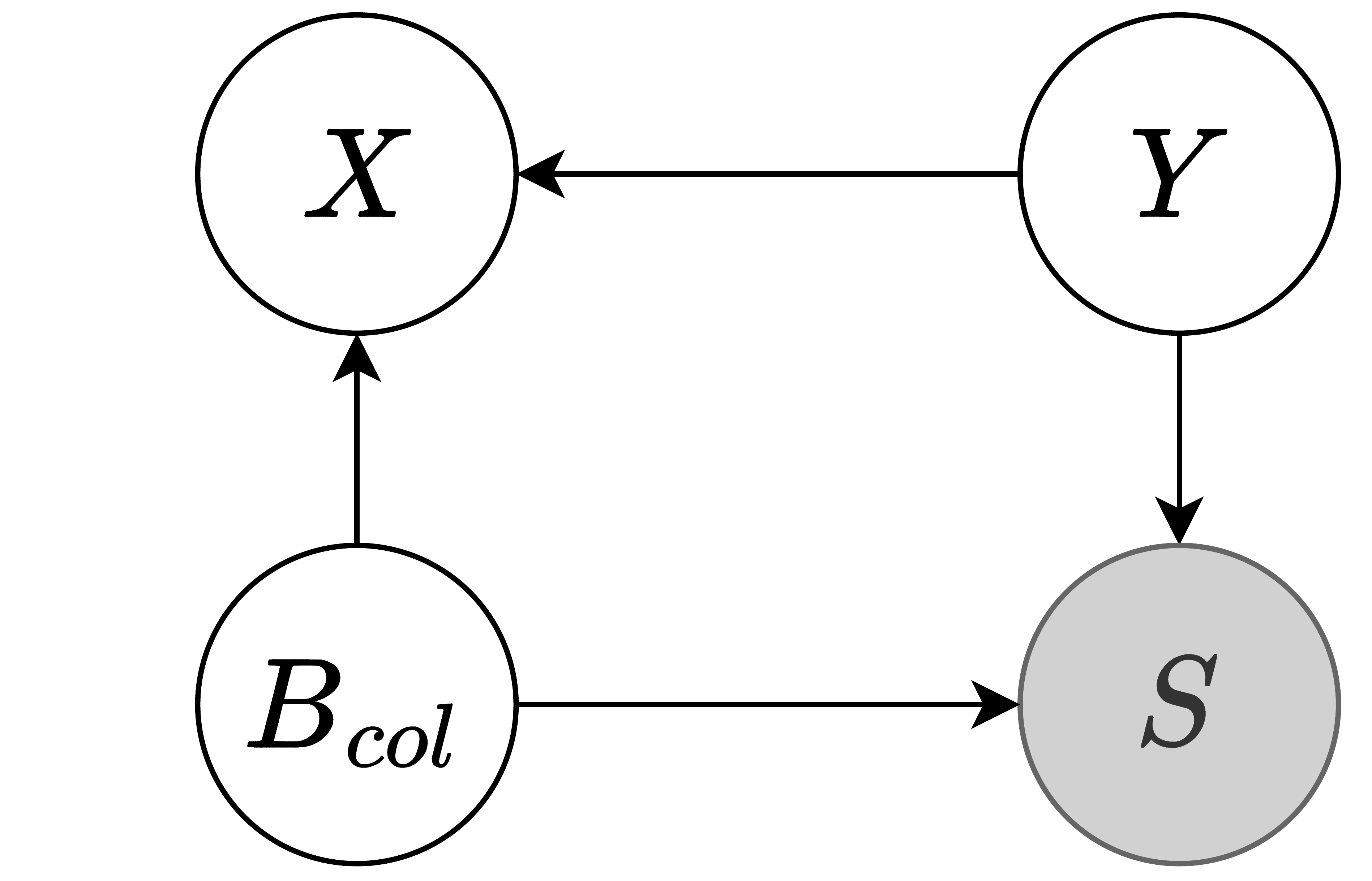}
        \caption{Collider}
        \label{fig:collider_ex}
    \end{subfigure}
    \begin{subfigure}[b]{0.15\textwidth}
        \centering
        \includegraphics[scale=0.04]{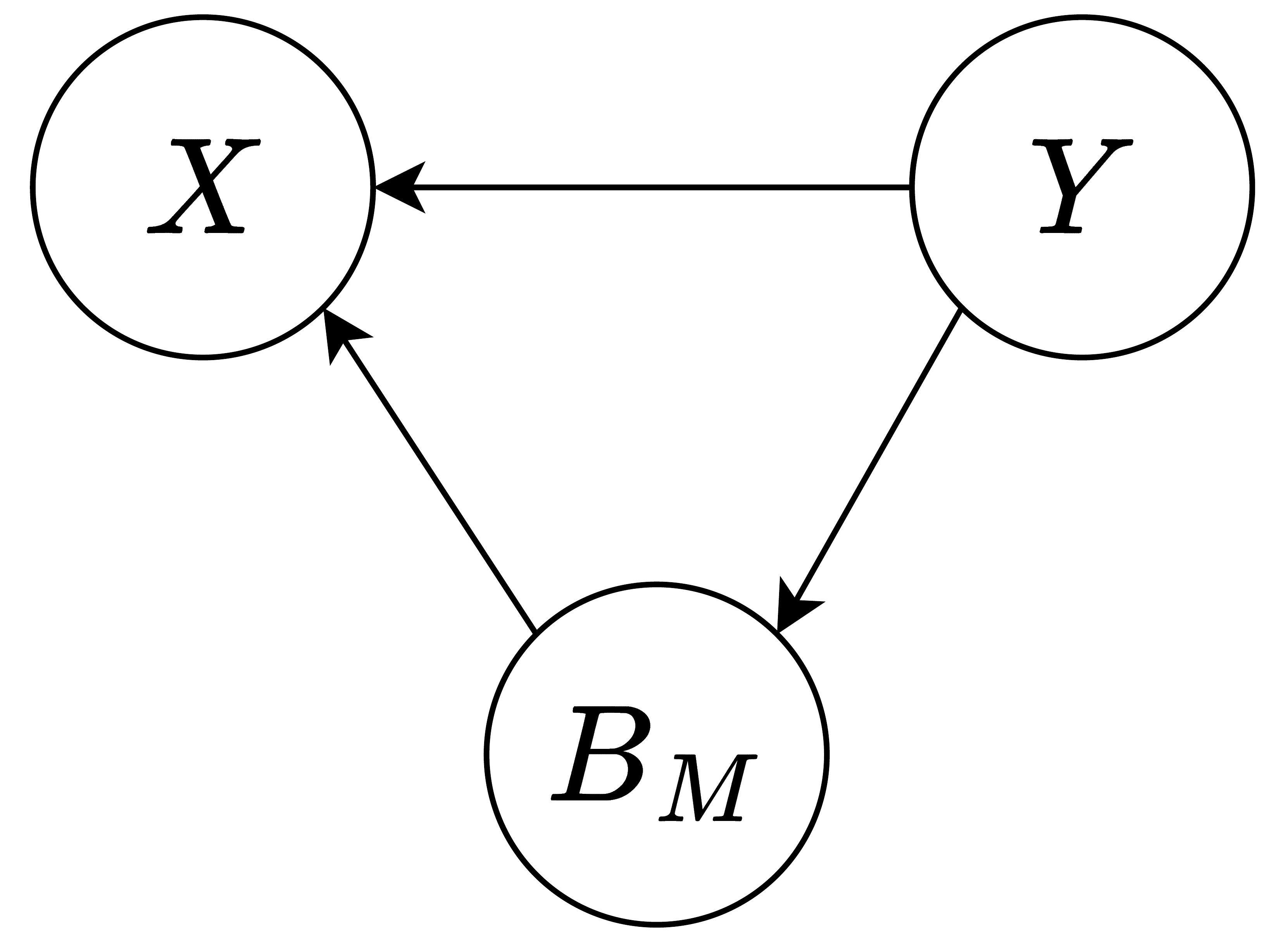}
        \caption{Mediator}
        \label{fig:mediator_ex}
    \end{subfigure}
    \caption{Canonical causal structures that induce dataset bias. Grey nodes indicate latent conditioning.}
    \label{fig:intro_examples}
\end{figure}

While such pathway-specific analysis is obvious in classical statistics and causal inference, this structural perspective is largely neglected in the deep learning community. To formalize how deep learning models can isolate these ctf-stable effects, we introduce the \emph{Standard Anti-Causal Model (SAM)}, a unifying framework that characterizes bias mechanisms in prediction tasks. Specifically, it builds on the \emph{anti-causal prediction setup} \citep{scholkopf2012causal}, where the target variable is assumed to generate the input data (Fig.~\ref{fig:intro_examples}). SAM is further inspired by the notations and analyses done by \citet{plecko2022causal} for a path-specific fairness analysis. This graphical framework not only allows us to derive a clear, observable conditional independence criterion for causal stability, but it also serves as a general analytic tool for studying counterfactual bias effects when the full data-generating process is under control.

While prior work has proposed tailored methods to address confounder bias \citep{neto2020causality,zhao2020training} or collider bias \citep{darlow2020latent}, we deduce from our framework that the specific causal nature of the bias is largely irrelevant: all types can be mitigated uniformly, provided a counterfactually stable (ctf-stable) effect of interest exists.

Specifically, we show that when no unobserved backdoor paths exist between the input $X$ and target $Y$, the conditional independence criterion $\hat{Y} \perp \mathbf{B} \mid Y$, where $\mathbf{B}$ denotes all observed biases and $\hat{Y}$ the model output, mathematically ensures \textit{causal stability}. In this case, all counterfactual indirect and spurious effects ($SE$) vanish, and predictions rely solely on the ctf-stable causal path. When unobserved backdoor paths are present, absolute causal stability can no longer be theoretically guaranteed. However, assuming the input $X$ does not contain valid proxy variables for these hidden confounders (where proximal causal inference methods could otherwise be applied), enforcing this conditional independence criterion remains the theoretical limit and best achievable approximation, as it successfully blocks all biased paths involving the observed $\mathbf{B}$. 

Similar independence-based criteria have been used empirically in fairness and bias mitigation \citep{makar2022fairness,kaur2022modeling,puli2021out}, and our causal pathway analysis provides a rigorous theoretical foundation for their effectiveness. Related approaches, such as \citep{quinzan2022learning}, arrive at partially overlapping conclusions but without leveraging path-specific causal graphical analysis. Compared to \citet{veitch2021counterfactual}, who prove counterfactual invariance under more restrictive assumptions, our results generalize to richer anti-causal prediction settings.

Moreover, building on this theoretical foundation, we develop a novel practical approach for enforcing causal stability in black-box models. While conditional distance correlation has been proposed before \citep{wang2015conditional}, its existing formulations are computationally prohibitive and essentially unusable in modern optimization and deep learning. We resolve this bottleneck by introducing two computational variants, strictly compatible with backpropagation: DISCO$_m$, a computational shortcut but limiting the reference points for the computation to $m$ examples, and a highly efficient variant that first decomposes the original V-statistic of \citet{wang2015conditional} and performs the computation in a single-shot manner, thus we call it sDISCO (single-shot DISCO). Unlike other shortcut-removal methods, our approach supports arbitrary combinations of target and bias variable types, scales effectively to multivariate settings, and requires significantly fewer hyperparameter choices. Therefore, we successfully enable conditional independence regularization for deep learning models.

In Section~\ref{sec:experiments}, we demonstrate across six diverse datasets that DISCO$_m$ and sDISCO consistently perform competitively with or superior to state-of-the-art bias mitigation methods, while having no scaling issues to multi-bias scenarios. Beyond empirical performance, we show how SAM further enables fine-grained pathway analysis of decision processes, highlighting the dual theoretical and practical impact of our work.

\subsection{Contextualization in CRL and Bias Mitigation}

\textbf{Causal Representation Learning (CRL).} Our work aligns with the broader goals of Causal Representation Learning \citep{scholkopf2021toward}, specifically the objective of learning representations that are robust and generalize in spite of shortcut and bias signals. A full contextualization within CRL can be read in Appendix \ref{app:crl_context}.

\textbf{Bias/Shortcut Mitigation Positioning.} While bias mitigation is an established field, our framework provides a structural foundation that generalizes prior findings.
See Appendix \ref{app:shortcut_context} for a full discussion on how our work relates to \citet{makar2022fairness,veitch2021counterfactual,puli2021out}.

\section{Causal Invariance}
\label{sec:causal_invariance}

To bridge the gap between our conceptual goals and practical deep learning optimization, we must formally map the flow of task-relevant signals versus spurious biases. In this section, we introduce the Standard Anti-Causal Model (SAM) to characterize these pathways. Using counterfactual analysis, we demonstrate mathematically how total variation in model predictions can be decomposed into stable and unstable components. From this decomposition, we derive the observational conditional independence criterion required to achieve causal stability. Finally, we critically analyze the underlying assumptions required for this criterion to hold in practice, addressing potential violations such as unobserved confounding.

\subsection{Standard Anti-Causal Model (SAM) and Causal Stability}
Our Standard Anti-Causal Model (SAM) is inspired by the work of \citet{plecko2022causal}, and we therefore adopt notation closely aligned with theirs. We define structural causal models (SCMs) as quadruples $(\mathbf{V},\mathcal{F},\mathbf{U},P_\mathbf{U})$, where $\mathbf{U}$ are the exogenous variables with distribution $P_\mathbf{U}$ defined outside the model. The set $\mathbf{V}$ contains the endogenous random variables, which are generated through deterministic functions of their parents in $\mathbf{V}$ combined with exogenous noise from $\mathbf{U}$. Specifically, $\mathcal{F} = \{f_1, \dots, f_n\}$ is a set of functions such that $v_i \leftarrow f_i(pa(V_i), U_i),$ where $pa(V_i) \subseteq \mathbf{V}$ are the parents of $V_i$, and $U_i \in \mathbf{U}$ is the exogenous variable associated with $V_i$.

\begin{figure}[t]
\centering
  \includegraphics[scale=0.04]{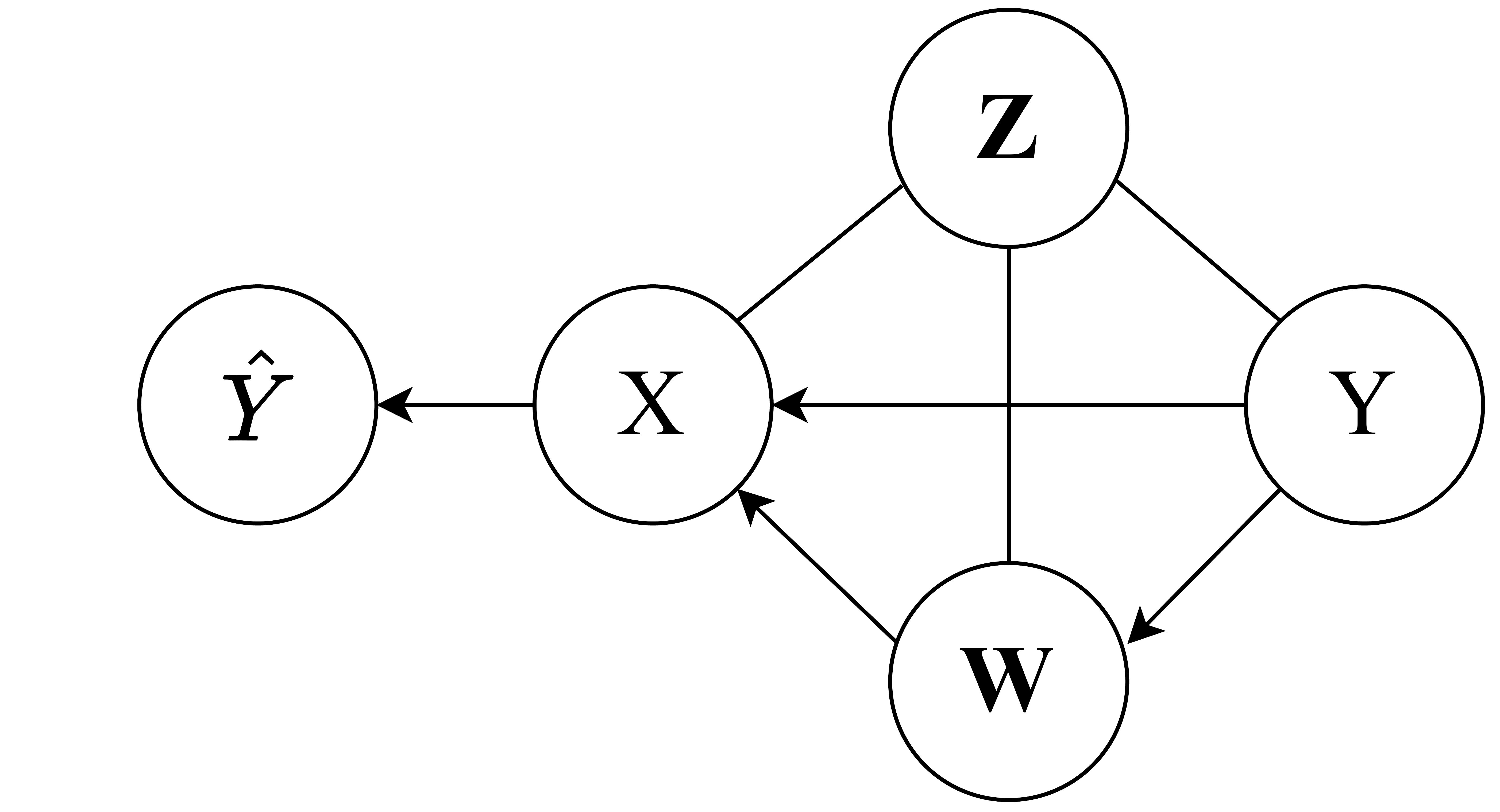}
  \caption{SAM graph. $Y$ is the target, $\mathbf{Z}$ are variables on active, non-directed paths between $X$ and $Y$, and $\mathbf{W}$ are mediator variables assumed to be unwanted shortcuts. $\hat{Y}$ denotes the prediction from a fixed prediction model. See appendix Fig.~\ref{fig:detailed_graph} for more details on permitted relationships.}
  \label{fig:SAM_graph}
\end{figure}%

Figure~\ref{fig:SAM_graph} illustrates the SAM considered in this work (we drop the exogenous variables $\mathbf{U}$ for simplicity). It represents a prototypical causal structure that encompasses many relevant scenarios in anti-causal prediction. We partition the variables $\mathbf{V}$ into three groups: the target $Y \in \mathbf{V}$, backdoor variables $\mathbf{Z} \subset \mathbf{V}$ (confounders and variables on open collider paths), and mediator variables $\mathbf{W} \subset \mathbf{V}$. Importantly, in our formulation, any colliders are assumed to be latently conditioned on (e.g., due to implicit sampling or selection bias), which inherently opens backdoor paths. The set $\mathbf{Z}$ explicitly contains the observed nodes that lie on these already-opened paths. For simplicity, independent exogenous noise variables $\mathbf{U}$, latently conditioned colliders, and variables $\mathbf{V}' \subset \mathbf{V}$ that do not open paths between our input $X$ and our target $Y$, are omitted from the figure. While our analysis extends to multidimensional targets, we restrict to a scalar $Y$ for clarity.

For counterfactual and interventional reasoning, we adopt the notation of \citet{pearl2009causality} and \citet{plecko2022causal}. The expression $P(X_w)$ is equivalent to $P(X \mid do(W=w))$ at the SCM level, while $P(X_w \mid W=w')$ denotes a genuine counterfactual statement. At the unit level, counterfactuals are defined for fixed exogenous realizations $\mathbf{U}=\mathbf{u}$ and denoted $X_w(\mathbf{u})$. These are deterministic functions rather than random variables and are generally non-identifiable. In the following, we assume that SAM holds for our prediction task; later, we discuss violations of these assumptions and their implications.

Throughout our definitions and proofs, we use summation (implicitly assuming the counting measure) for discrete random variables. With appropriate measure-theoretic extensions, these results carry over to continuous variables by replacing sums with integrals.

Our main goal for this section is to establish why the conditional independence criterion presented in the introduction, reading $\hat{Y} \perp \mathbf{B} \mid Y$, is sufficient to mitigate bias, directly deriving this through causal reasoning.

\paragraph{Measuring Effects of $Y$ on $\hat{Y}$.}
A naive way to quantify the influence of $Y$ on $\hat{Y}$ is via the total variation ($TV$) as defined in \citet{plecko2022causal}:
\begin{equation}   
TV_{y_0,y_1}(\hat{y}) = P(\hat{y}\mid y_1) - P(\hat{y}\mid y_0),
\end{equation}
where lowercase letters denote realizations $\hat{y}, y_1,$ and $y_0$. Note, this is a different TV than the measure-theoretic one. Classical maximum likelihood estimation implicitly maximizes this difference. Assuming binary classification for simplicity, maximizing $P(\hat{Y}=y \mid Y=y)$ corresponds to maximizing the True Positive Rate and minimizing the False Positive Rate, thereby maximizing their difference (TV). However, in the SAM graph, we see that $TV$ captures all types of dependencies, including shortcuts and spurious associations over the pathways $Y \text{---} \mathbf{Z} \text{---} \hat{Y}$ and $Y \to \mathbf{W} \to \hat{Y}$, rather than only the desired causal influence passing safely through $X$.

\paragraph{Counterfactual-(Stable, IE, SE)}~\citep{plecko2022causal}.
\label{def:tv}
SCMs allow us to specify, and in some cases identify, path-specific effects. While standard causal fairness literature often defines the primary path of interest as a counterfactual direct effect, in our anti-causal setting, the effect of $Y$ on $\hat{Y}$ is strictly mediated by the input $X$. To maintain causal precision and avoid ambiguity, we term this robust target pathway the \emph{counterfactually stable effect} (ctf-stable). Given observations $Y=y$, $\mathbf{Z}=\mathbf{z}$, and $\mathbf{W}=\mathbf{w}$, we let $\mathbf{C} = \{y, \mathbf{w}, \mathbf{z}\}$ denote the conditioning set and define the counterfactually stable (ctf-stable) and indirect (ctf-IE) effects as:
\begin{align}
    ctf\mh stable_{y_0,y_1}(\hat{y}\mid \mathbf{C}) &= P(\hat{y}_{y_1,w_{y_0}} \mid \mathbf{C}) - P(\hat{y}_{y_0} \mid \mathbf{C}), \nonumber \\
    ctf\mh IE_{y_1,y_0}(\hat{y} \mid \mathbf{C}) 
    &= P(\hat{y}_{y_1,w_{y_0}} \mid \mathbf{C}) - P(\hat{y}_{y_1} \mid \mathbf{C}).
\end{align}
Additionally, we define the counterfactual spurious effect (\emph{ctf-SE}) as
\begin{equation}
    ctf\mh SE_{y_1,y_0}(\hat{y}) = P(\hat{y}_{y_1}\mid y_0) - P(\hat{y}_{y_1}\mid y_1),
\end{equation}
which captures all non-directed dependencies between $Y$ and $\hat{Y}$.

For notational/mathematical simplicity, without losing any validity of the below statements, we omit the input variable $X$ and treat paths through $X$ as directly reaching $\hat{Y}$.

\begin{proposition}[TV-Decomposition]
\label{prop:tv_decomp}
The total variation can be decomposed into stable, indirect, and spurious components as
\begin{equation}
\begin{aligned}
    TV_{y_0,y_1}(\hat{y}) &= \sum_{\mathbf{w,z}} P(\mathbf{w,z} \mid y) \big[ \operatorname{ctf-stable}_{y_0,y_1}(\hat{y}\mid \mathbf{C}) \\
    &\quad - \operatorname{ctf-IE}_{y_1,y_0}(\hat{y} \mid \mathbf{C}) \big] - \operatorname{ctf-SE}_{y_1,y_0}(\hat{y}),
\end{aligned}
\end{equation}
where $\mathbf{C} = \{y, \mathbf{w}, \mathbf{z}\}$ denotes the conditioning set. The proof is given in Appendix~\ref{app_sub:proof_1}.
\end{proposition}

We assume that the only reliable relationship under distribution, covariate, or domain shifts is the counterfactually stable effect $ctf\mh stable_{y_0,y_1}$ (i.e., $Y \rightarrow X \rightarrow \hat{Y}$). We now identify an observational criterion that ensures both $ctf\mh IE_{y_1,y_0}$ and $ctf\mh SE_{y_1,y_0}$ vanish.

\begin{definition}[Causal Stability]
A predictor is \emph{causally stable} if for all relevant variables and realizations it satisfies
\begin{equation}
ctf\mh IE_{y_1,y_0}(\hat{y}\mid y,\mathbf{w},\mathbf{z}) = 0
\quad\text{and}\quad
ctf\mh SE_{y_1,y_0}(\hat{y}) = 0.
\end{equation}
\end{definition}

\begin{theorem}[Cond. Independence $\Rightarrow$ Causal Stability]
If a model’s predictions satisfy $\hat{Y} \perp \mathbf{W},\mathbf{Z} \mid Y$ under SAM, then the model is \emph{causally stable}.
The proof is provided in Appendix~\ref{app_sub:proof_2}.
\end{theorem}

Note that causal stability alone does not guarantee predictive usefulness. For example, a constant predictor that outputs the same $\hat{Y}$ irrespective of the input trivially satisfies causal stability, but carries no meaningful predictive signal. The following result shows how to obtain both stability and a strong predictive signal.

\begin{theorem}[Maximizing $ctf\mh stable_{y_0,y_1}$]
A maximum likelihood (MLE) predictor that satisfies $\hat{Y} \perp \mathbf{W},\mathbf{Z} \mid Y$ also maximizes the counterfactually stable effect $ctf\mh stable_{y_0,y_1}$.
The proof is provided in Appendix~\ref{app_sub:proof_2}.
\end{theorem}

\begin{corollary}[Indirect vs.~Spurious Effects]
There is no need to distinguish between mediated and spurious bias effects when analyzing causal stability. By defining the set of all bias variables as $\mathbf{B} = \mathbf{W} \cup \mathbf{Z}$, the conditional independence criterion reduces to $\hat{Y} \perp \mathbf{B} \mid Y$.
\end{corollary}

Thus, to obtain a stable predictor that relies solely on the stable pathway from $Y$ to $X$, we can formulate the following constrained optimization problem:
\begin{equation}
\begin{aligned}
\label{eq:optim_goal}
    \min_{\theta} \quad &  \mathbb{E}_{(X, Y)} \left[ L\left(Y, g_\theta(X)\right) \right] \\
    \textrm{s.t.} \quad & \hat{Y} \perp \mathbf{B} \mid Y,
\end{aligned}
\end{equation}
where $\hat{Y} = g_\theta(X)$, $g$ is a learning model parametrized by $\theta$, and $L$ is a standard loss function (e.g., mean squared error, cross-entropy) corresponding to an MLE under appropriate noise assumptions.

Equation \ref{eq:optim_goal} presents an idealized theoretical objective. However, successfully translating this mathematical guarantee into practical deep learning relies on specific properties of the data-generating process. Before introducing our computational estimators for this objective in Section \ref{sec:theory}, we must first examine the core assumptions underlying SAM and discuss the practical implications when these conditions are violated.

\subsection{Analyzing Assumptions of SAM}
\label{sec:assumptions}

\textbf{Sufficient Variability (Positivity).} A fundamental requirement for our method, and indeed for any observational bias mitigation approach, is the assumption of sufficient variability or \textit{overlap}. Analogous to the positivity assumption in causal inference, we assume that for any target realization $y$, the data distribution has support over the bias attributes $b$, i.e., $P(B=b \mid Y=y) > 0$. For example, to disentangle the spurious correlation between `waterbird` and `water background`, the dataset must contain, however rarely, counter-examples such as waterbirds in land environments. Consequently, if $B$ is a deterministic function of $Y$ (zero overlap), the confounding is non-identifiable from observational data alone and debiasing the predictions is generally impossible without further guidance. This limitation applies to all bias mitigation methods.

\textbf{Selective Mediation.} While our framework treats mediators as sources of bias by default, practical applications often involve mediators that carry robust, task-relevant causal signals. In such cases, we can distinguish between unstable mediators and a subset of useful mediators $\mathbf{W}_{stable} \subset \mathbf{W}$. We then redefine the regularization bias set as $\mathbf{B}' = (\mathbf{W} \setminus \mathbf{W}_{stable}) \cup \mathbf{Z}$. Our theoretical results transfer directly to this setting: the model is regularized to be independent of $\mathbf{Z}$ and unstable mediators, while remaining free to utilize information from $\mathbf{W}_{stable}$.

\textbf{Unobserved Confounding \& Proxies.} Finally, an important violation of our assumptions arises from unobserved pathways of information. Since confounders and other spurious effects are modeled by $\mathbf{Z} \subseteq \mathbf{B}$, the absence of some confounders means we work with an incomplete set of observations. The true set of spurious attributes is then $\mathbf{Z}_{true} = \mathbf{Z} \cup \mathbf{Z}'$, where $\mathbf{Z}'$ represents unobserved confounders. In this case, even an MLE predictor constrained by $\hat{Y} \perp \mathbf{B} \mid Y$ remains biased, as information may still leak through unobserved paths involving $\mathbf{Z}'$. However, strictly assuming that the input $X$ does not contain valid proxy variables for these unobserved confounders, the criterion $\hat{Y} \perp \mathbf{B} \mid Y$ remains the best achievable approximation of causal stability, as it guarantees all observed paths involving $\mathbf{B}$ remain blocked. If valid proxies are available, proximal causal inference methods \citep{miao2018identifying,tchetgen2020introduction} serve as a complementary approach to recover identifiability.

\section{Distance Correlation for Conditional Independence}

In this section, our main goal is to establish a practical method that enables black-box predictors, such as deep neural networks, to attain causally stable solutions. Intuitively, causal stability requires a model's predictions to be invariant to spurious pathways and rely solely on the true task signal. To enforce this, we must optimize towards the conditional independence criterion defined in Eq. \ref{eq:optim_goal}: the model's predictions must be independent of the bias attributes, conditioned on the true target. 

Traditional linear metrics like conditional covariance are insufficient here, as neural networks learn highly non-linear representations. Distance correlation \citep{szekely2007measuring} provides a powerful alternative, capable of quantifying non-linear statistical dependence between arbitrary random vectors. Subsequent work \citep{poczos2012conditional,wang2015conditional,pan2017conditional} extended this concept to the conditional setting. However, applying exact conditional distance correlation to deep learning presents a severe computational bottleneck: naively computing the local dependencies for a batch of size $n$ requires allocating an $\mathcal{O}(n^3)$ tensor, which quickly exhausts GPU memory.

We address this algorithmic gap by using two differentiable conditional distance correlation computations: DISCO$_m$ and sDISCO. While DISCO$_m$ samples local neighborhoods to reduce memory, our primary contribution, \emph{single-shot} sDISCO, introduces an exact algebraic factorization that computes the rigorous conditional distance correlation across all points simultaneously while strictly confining memory to $\mathcal{O}(n^2)$. Finally, we demonstrate how these estimators can be seamlessly integrated into standard training pipelines to penalize shortcut learning.

\subsection{Background}
\label{sec:theory}
To establish the theoretical properties of our conditional independence estimators, we build upon the work of \citet{lyons2013distance}, who worked on the unconditional case.

Let $(\mathcal{X}, d_{\mathcal{X}})$, $(\mathcal{Y}, d_{\mathcal{Y}})$, and $(\mathcal{Z}, d_{\mathcal{Z}})$ be metric spaces. Let $(X, Y, Z)$ be random elements defined on a common probability space, taking values in $\mathcal{X} \times \mathcal{Y} \times \mathcal{Z}$, with joint distribution $P$. Denote by $P_{(X,Y)|Z}$ the regular conditional distribution of $(X, Y)$ given $Z$. For each $z \in \mathcal{Z}$, let $\theta_z := P_{(X,Y)|Z=z}$ be a Borel probability measure on $\mathcal{X} \times \mathcal{Y}$.

\begin{definition}[Finite first moments]
We say that $P_{(X,Y)|Z=z}$ has finite first moments if
\begin{equation}
\int d_{\mathcal{X}}(x, o_{\mathcal{X}}) + d_{\mathcal{Y}}(y, o_{\mathcal{Y}})\; d\theta_z(x, y) < \infty
\end{equation}
for some (and hence all, by the triangle inequality) base points $o_{\mathcal{X}} \in \mathcal{X}$ and $o_{\mathcal{Y}} \in \mathcal{Y}$.
\end{definition}

\begin{definition}[Conditional distance covariance]
Let $P_Z$ denote the distribution of $Z$. For $z \in \mathcal{Z}$, let $\mu_z := P_{X|Z=z}$ and $\nu_z := P_{Y|Z=z}$ be the conditional marginals. For $\theta_z$, define the distance-centered kernels
\begin{equation}
d_{\mu_z}(x, x') := d_{\mathcal{X}}(x, x') - a_{\mu_z}(x) - a_{\mu_z}(x') + D(\mu_z),
\end{equation}
where $a_{\mu_z}(x) := \int d_{\mathcal{X}}(x, x')\, d\mu_z(x')$ and $D(\mu_z) := \int\int d_{\mathcal{X}}(x, x')\, d\mu_z(x)d\mu_z(x')$. The definition of $d_{\nu_z}$ is analogous.

The conditional distance covariance is then given by
\begin{equation}
\operatorname{dCov}^2(X, Y \mid Z) := \mathbb{E}_{Z} \left[ \operatorname{dCov}^2(X, Y \mid Z = z) \right],
\end{equation}
with
\begin{equation}
\begin{split}
\operatorname{dCov}^2(X, Y \mid Z = z) := \int & d_{\mu_z}(x, x')\, d_{\nu_z}(y, y') \\
& \times d\theta_z(x, y)\, d\theta_z(x', y').
\end{split}
\end{equation}
\end{definition}

\begin{theorem}[Conditional independence and strong negative type]
Assume $\mu_z$, $\nu_z$, and $\theta_z$ have finite first moments. Further suppose that the metric spaces $\mathcal{X}$ and $\mathcal{Y}$ are of \emph{strong negative type}\footnote{Strong negative type ensures that distance-based measures such as distance covariance uniquely identify independence; see Appendix~\ref{app:disco_theory}.}. Then
\begin{equation}
\begin{split}
\operatorname{dCov}^2&(X, Y \mid Z) = 0 \\
\iff& P_{(X,Y)|Z=z} = P_{X|Z=z} \otimes P_{Y|Z=z} \\
&\text{for $P_Z$-almost every $z$}.
\end{split}
\end{equation}
That is, $X$ and $Y$ are conditionally independent given $Z$ almost surely. A proof, together with full definitions and necessary lemmas, is provided in Appendix~\ref{app:disco_theory}.
\end{theorem}

Since distance covariance is unbounded, we instead use its correlation analogue, which lies in $[0,1]$ and is more optimization-friendly.

\begin{definition}[Conditional distance correlation]
The \emph{conditional distance correlation} between \( X \) and \( Y \) given \( Z \) is defined as
\begin{equation}
\operatorname{dCor}^2(X, Y \mid Z) := \frac{ \operatorname{dCov}^2(X, Y \mid Z) }
{ \sqrt{ \operatorname{dVar}^2(X \mid Z)\, \operatorname{dVar}^2(Y \mid Z) } }.
\end{equation}
with the convention $0/0 := 0$.
\end{definition}

\subsection{Sample Estimation}

Let \( \{(X_i, Y_i, Z_i)\}_{i=1}^n \) be an i.i.d.\ sample in Euclidean spaces. We use the Euclidean distance \( d(x,x') = \|x - x'\| \), which is of strong negative type \citep{sejdinovic2013equivalence}.

To construct our estimators, we fix a positive-definite kernel \( K_h: \mathcal{Z} \times \mathcal{Z} \to \mathbb{R}_{\geq 0} \) with bandwidth \( h \) (e.g., an RBF kernel). We define the row-normalized weight matrix \( W = [w_{ij}] \) such that $w_{ij} = K_h(Z_i, Z_j) / \sum_k K_h(Z_i, Z_k)$, which acts as an empirical estimator for the conditional probability density. We also compute the pairwise distance matrices \( A = [a_{ij}] \) and \( B = [b_{ij}] \) where $a_{ij} = d_{\mathcal{X}}(X_i, X_j)$ and $b_{ij} = d_{\mathcal{Y}}(Y_i, Y_j)$.

\paragraph{Method 1: DISCO$_m$.}
The standard approach to compute the V-statistic of conditional distance covariance evaluates the metric locally \citep{wang2015conditional}. For a specific reference point \( Z_i \), we use the corresponding weight vector \( w^{(i)} \) (the $i$-th row of $W$) to construct the locally centered matrices:
\begin{equation}
A^{(i)}_{k\ell} = a_{k\ell} - \sum_k^n w^{(i)}_k a_{k\ell} - \sum_\ell^n w^{(i)}_\ell a_{k \ell} + \sum_{k,\ell}^n w^{(i)}_k w^{(i)}_\ell a_{k \ell},
\end{equation}
and define \( B^{(i)}_{k\ell} \) analogously. The local conditional squared distance covariance $\mathcal{V}^{(i)}_{XY}$ at \( Z_i \) is evaluated as the weighted inner product of these centered matrices:
\begin{equation}
\mathcal{V}^{(i)}_{XY} = \sum_{k=1}^n \sum_{\ell=1}^n w^{(i)}_k w^{(i)}_\ell A^{(i)}_{k\ell} B^{(i)}_{k\ell}.
\end{equation}
To manage memory, the DISCO$_m$ estimator averages these local correlations over a randomly sampled subset of $m \leq n$ reference points:
\begin{equation}
\text{DISCO}_m(X, Y \mid Z)
=
\frac{1}{m} \sum_{i=1}^m
\frac{ \mathcal{V}^{(i)}_{XY} }
{ \sqrt{ \mathcal{V}^{(i)}_{XX} \mathcal{V}^{(i)}_{YY} } }.
\end{equation}

\begin{proposition}
Building on standard kernel regression assumptions, the DISCO$_m$ estimator provides a consistent estimation of $\operatorname{dCor}^2(X, Y \mid Z)$. See Appendix~\ref{app:disco_m_consistency} that shows consistency of this estimator.
\end{proposition}

\textbf{Method 2: sDISCO (Single-Shot DISCO).} While $\text{DISCO}_{m}$ is effective, naively evaluating the exact global correlation ($m=n$) requires broadcasting the matrices to shape $(n, n, n)$. This $\mathcal{O}(n^{3})$ memory footprint is prohibitive for standard deep learning batch sizes. To overcome this, we propose sDISCO, which factorizes the mathematical operations to achieve the exact global correlation in a single step while strictly confining memory to $\mathcal{O}(n^{2})$.

We exploit that the weighted marginal sums of locally centered matrices $A^{(i)}$ and $B^{(i)}$ are exactly zero; when expanding their inner product, cross-terms involving isolated marginal means naturally vanish. While \citet{wang2015conditional} used this expansion to theoretically decompose the V-statistic into three components ($D_1, D_2, D_3$) to prove estimator equivalence, we leverage it computationally. The problem simplifies to extracting the diagonals of implicit quadratic forms, which can be executed simultaneously for all $n$ points using the Hadamard product ($\circ$) and dense matrix multiplications.

First, we compute local row means: $M^{X} = WA$ and $M^{Y} = WB$. Next, we extract local grand means: $g^{X} = (W \circ M^{X})\mathbf{1}$ and $g^{Y} = (W \circ M^{Y})\mathbf{1}$, where $\mathbf{1}$ is a vector of ones. The localized covariances for all $n$ reference points evaluate exactly to the sum of three terms (mapping directly to $D_1, D_2,$ and $D_3$):
\begin{align}
    T_{1} &= (W \circ (W(A \circ B)))\mathbf{1}, \\
    T_{2} &= g^{X} \circ g^{Y}, \\
    T_{3} &= (W \circ M^{X} \circ M^{Y})\mathbf{1}.
\end{align}
The vector containing all exact local squared covariances is simply $\mathcal{V}_{XY} = T_{1} + T_{2} - 2T_{3}$. Applying this symmetrically to obtain variances $\mathcal{V}_{XX}$ and $\mathcal{V}_{YY}$, the global sDISCO estimator is the average of the exact local correlations.

\begin{proposition}
Building on the properties of V-statistics with random kernels, sDISCO calculates the exact sample conditional distance correlation $\operatorname{dCor}^2(X, Y \mid Z)$ uniformly in $\mathcal{O}(n^2)$ memory. See Appendix~\ref{app:sdisco_consistency} for the proof.
\end{proposition}

\subsection{Practical Application for Causal Stability}
Because solving the constrained optimization in Eq. \ref{eq:optim_goal} directly is infeasible, we adopt a regularization approach. Substituting the variables from SAM, namely the true target $Y$, the observed bias attributes $\mathbf{B}$, and the model's continuous predictions $\hat{Y}=g_\theta(X)$, we jointly minimize the prediction loss and the conditional dependence penalty:
\begin{equation}
\min_{\theta} \sum L\left(Y, \hat{Y}\right) + \lambda\, \text{sDISCO}(\hat{Y},\mathbf{B} \mid Y).
\end{equation}

This framework aligns seamlessly with black-box, gradient-based training. Importantly, during the forward pass and inference, the model $g_\theta$ generates its prediction $\hat{Y}$ using only the input $X$; it does not observe the bias $\mathbf{B}$. The bias variables are utilized strictly during training in the backward pass to compute the DISCO penalty, forcing the network to unlearn representations that leak bias information. Both estimators require a kernel bandwidth parameter $\sigma_Y$ over the conditioning variable $Y$, and a hyperparameter $\lambda$ to control regularization strength. For DISCO$_m$, we fix $m$ to be 20 percent of the batch size, while sDISCO natively processes the full batch.

We show how sDISCO is non-trivial when it comes to compute times and memory requirements in deep learning settings, see Section \ref{app:run_time_analysis}.

\section{Experiments}
\label{sec:experiments}

In this section, we evaluate the efficacy of DISCO$_m$ and sDISCO for bias mitigation. We design experiments on six datasets (vision and NLP) that vary in realism, bias sources, and the non-linear relationships between target and bias attributes. This creates a diverse and challenging testbed for assessing causal stability. We benchmark against seven representative baselines, detailed below, and additionally leverage SAM for pathway-specific counterfactual analysis in fully controlled simulation settings. This allows us to assess not only empirical performance but also the causal properties of our methods in the presence of known and unknown biases.

\subsection{Datasets and Experimental Setup}

Figure~\ref{fig:causal-graphs} illustrates the causal structures underlying our datasets. Across all datasets, except MNLI, we follow the standard evaluation protocol in domain generalization and bias mitigation \citep{sagawa2019distributionally,arjovsky2019invariant,ganin2016domain}: we train on a biased dataset, select models on an unbiased validation set, and report final results on an unseen unbiased test set. For these datasets, we additionally add label noise to make the task harder and to make models to latch on spurious correlations more easily. The full details for our MNLI dataset can be found in the Appendix \ref{app:sub_mnli} as this dataset deviates from the rest in important aspects.

\begin{figure}[t]
    \begin{subfigure}{0.175\textwidth}
        \centering
        \includegraphics[width=0.8\linewidth]{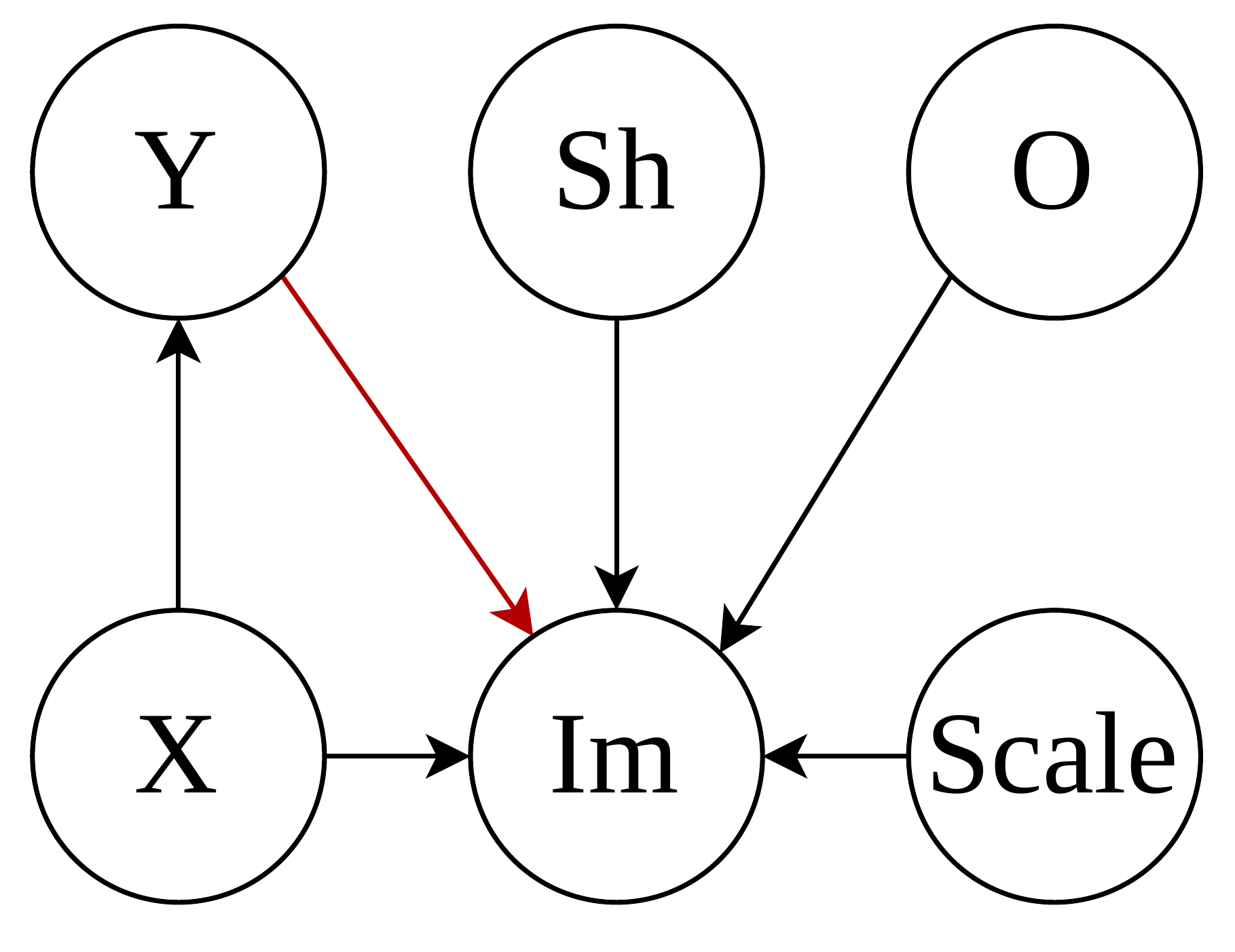}
        \caption{dSprites}
    \end{subfigure}
    \hfill
    \centering
        \begin{subfigure}{0.15\textwidth}
        \centering
        \includegraphics[width=0.8\linewidth]{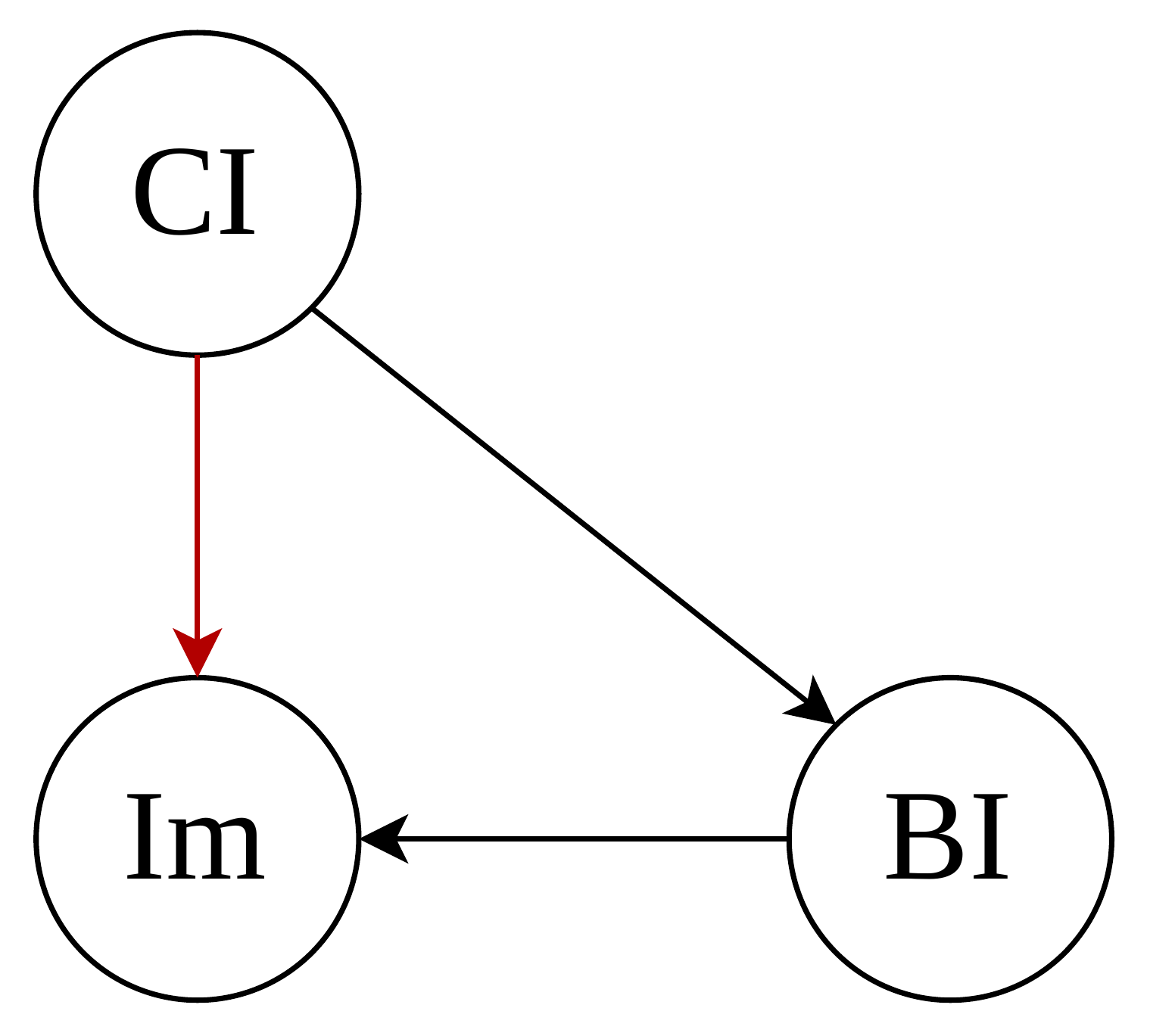}
        \caption{Blob Dataset}
    \end{subfigure}
    \hfill
    \begin{subfigure}{0.15\textwidth}
        \centering
        \includegraphics[width=0.8\linewidth]{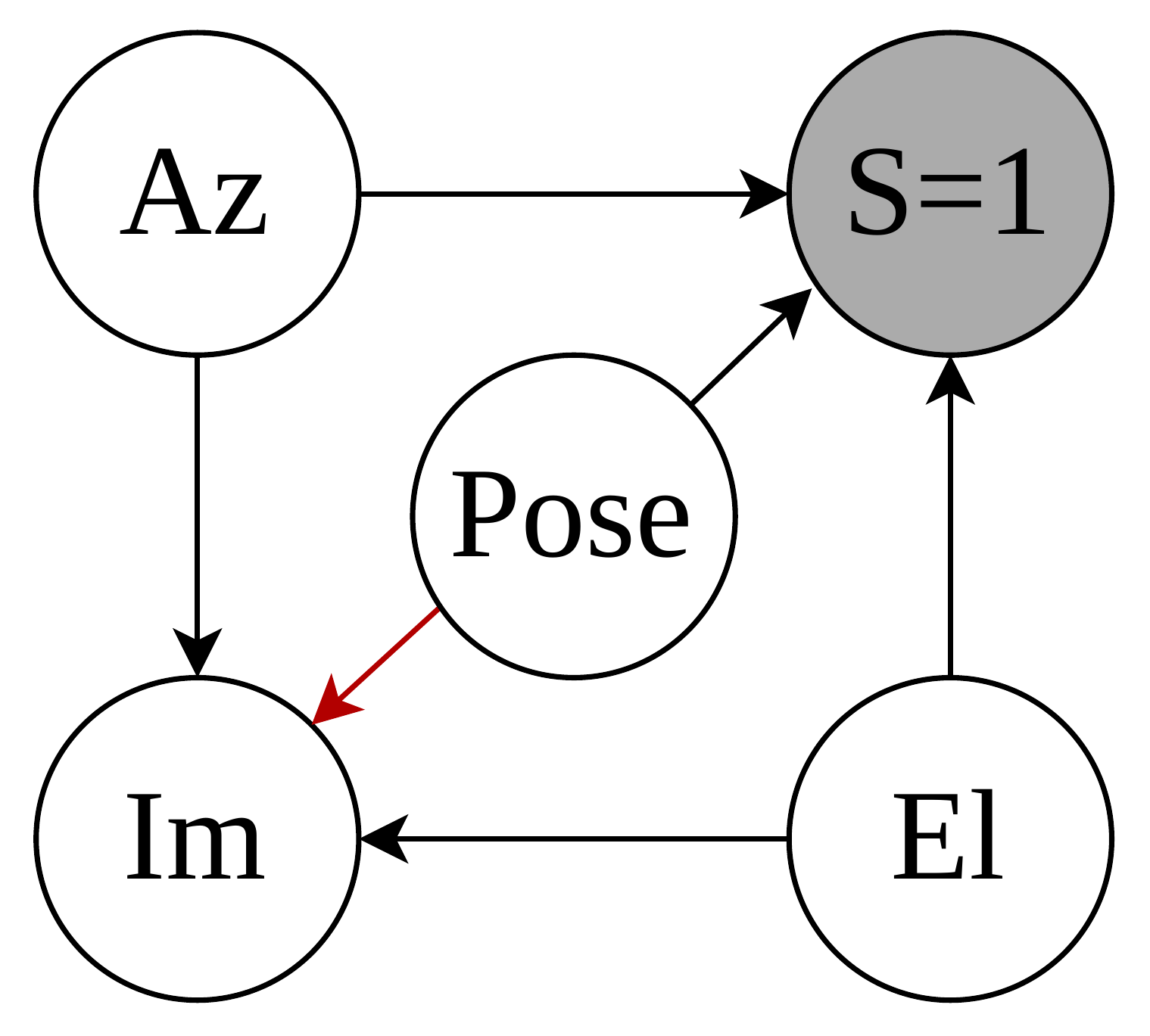}
        \caption{YaleB}
    \end{subfigure}
    \hfill
    \begin{subfigure}{0.15\textwidth}
        \centering
        \includegraphics[width=0.8\linewidth]{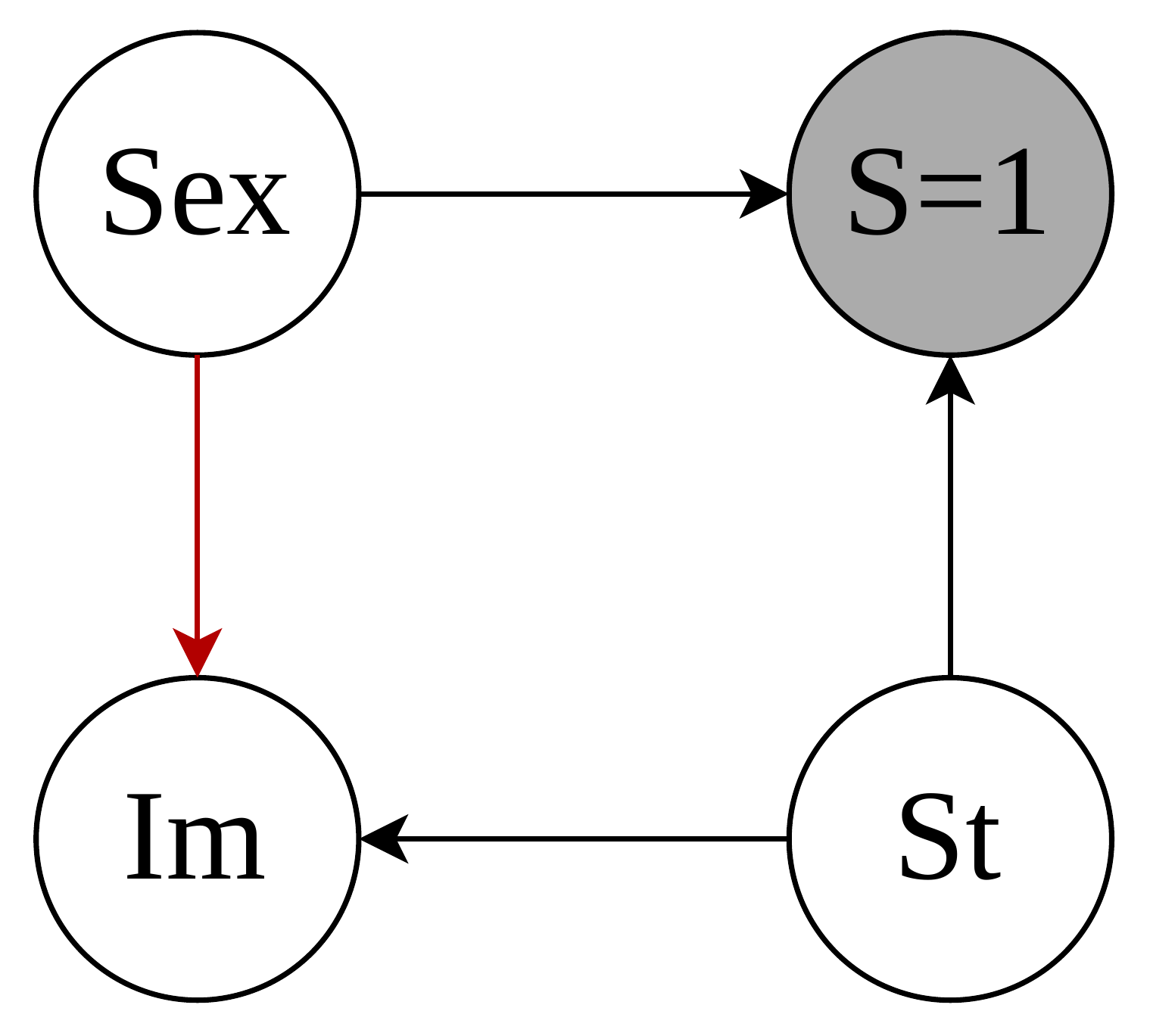}
        \caption{FairFace}
    \end{subfigure}
    \hfill
    \begin{subfigure}{0.15\textwidth}
        \centering
        \includegraphics[width=0.8\linewidth]{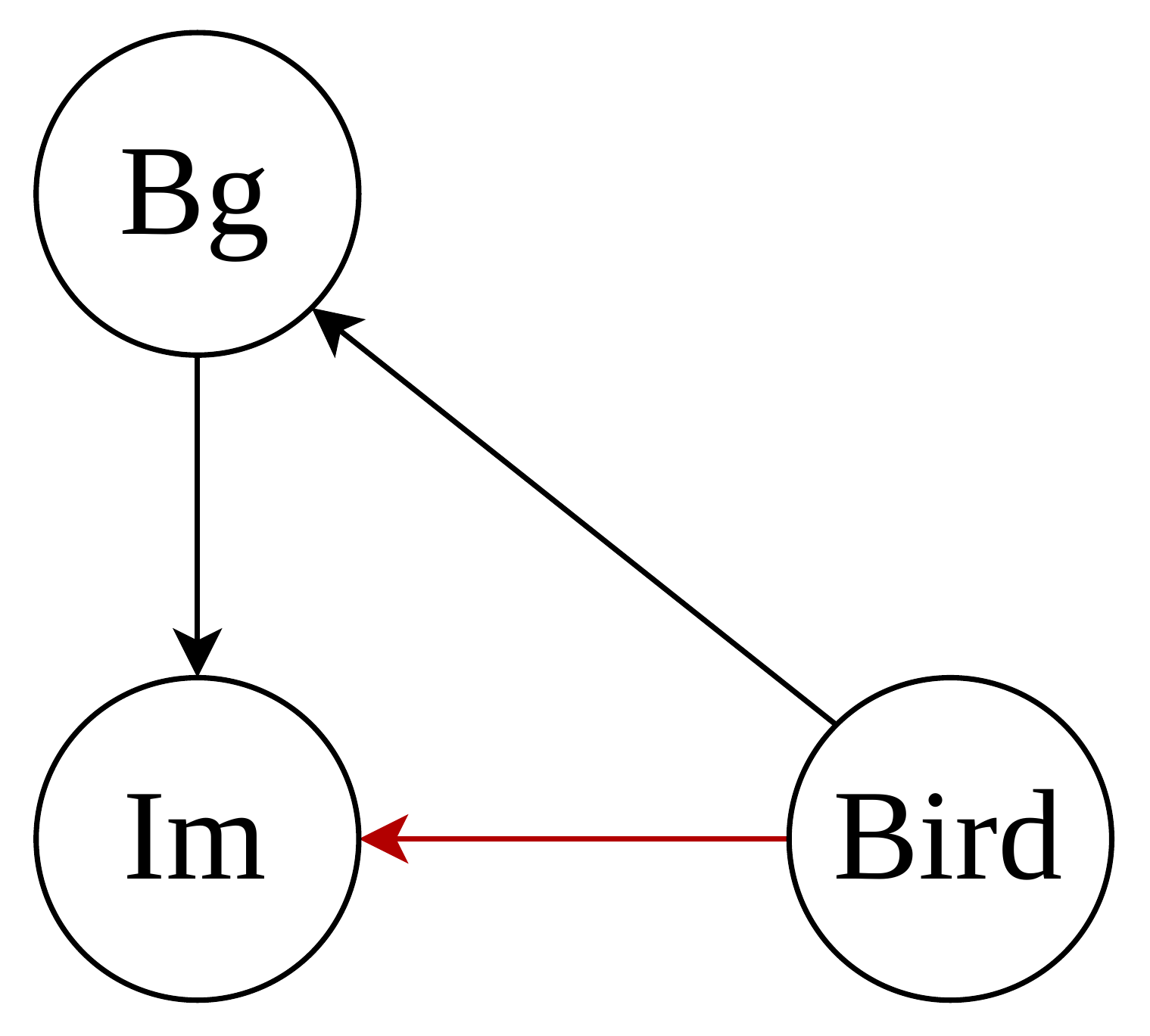}
        \caption{Waterbirds}
    \end{subfigure}
    \caption{Causal graphs used in our experiments across different datasets. Blue path means causally task-relevant, going from target to our input (image).}
    \label{fig:causal-graphs}
\end{figure}

\paragraph{dSprites.}
Based on \citet{dsprites17}, we construct images of geometric shapes (Sh) with different orientation (O), scale (Sc), y-position (Y), and x-position (X). The regression target, Y, nonlinearly determines the object's y-position while X acts as a confounder, see Section \ref{app:dsprites}.

\paragraph{Blob dataset.}
Inspired by \citet{adeli2021representation}, we generate synthetic images with two Gaussian blobs. One blob is causally related to the regression target, causal intensity (CI), while the other acts as a spurious mediator, bias intensity (BI). All details can be read in Section \ref{app:blob}.

\paragraph{YaleB.}
From the extended YaleB dataset \citep{yaleB,extYaleB}, we predict face pose (collapsed into three categories: frontal, slightly left, maximally left). Azimuth (Az) and elevation (El) of the light source serve as continuous bias variables, see Section \ref{app:yaleb}.

\paragraph{FairFace.}
We use FairFace \citep{karkkainen2021fairface} to predict sex as the target. Skin tone (light vs.\ dark) acts as a bias through selection bias. See Section \ref{app:fairface} for all details, and \ref{sec:limitations} for ethical considerations.

\paragraph{Waterbirds.}
We adopt the Waterbirds dataset \citep{sagawa2019distributionally}, constructed by overlaying birds \citep{welinder2010caltech,wah2011caltech} on land or water backgrounds (Bg) \citep{zhou2017places}, see Section \ref{app:waterbirds}.

\paragraph{MNLI.}
Finally, we also include the MNLI dataset \citep{williams2018broad}. In this dataset, the task is to predict entailment of a sentence given another one. The bias in this task are negation keywords, that are highly correlated with entailment. We closely follow \citet{sagawa2019distributionally} in designing our experiments. Full details for this setup are given in the Section \ref{app:sub_mnli}.

\subsection{Bias Mitigation Baselines}

\begin{table}[t] %
\centering
\scriptsize %
\setlength{\tabcolsep}{1.8pt} %
\renewcommand{\arraystretch}{1.2}
\begin{tabular}{l ccc c ccc c cc}
\toprule
& \multicolumn{3}{c}{Bias Type} & & \multicolumn{3}{c}{Target Type} & & \multicolumn{2}{c}{Attr.} \\
\cmidrule{2-4} \cmidrule{6-8} \cmidrule{10-11}
Method & B & C & $\mathbb{R}$ & & B & C & $\mathbb{R}$ & & 1 & M \\
\midrule
GDRO, Fishr, IRM & \cmark & \cmark & \xmark & & \cmark & \cmark & \xmark & & \cmark & \cmark \\
C-MMD          & \cmark & $\sim$ & \xmark & & \cmark & $\sim$ & \xmark & & \cmark & $\sim$ \\
Adversarial    & \cmark & \cmark & \cmark & & \cmark & \cmark & \cmark & & \cmark & $\sim$ \\
DISCO, CIRCE, HSCIC   & \cmark & \cmark & \cmark & & \cmark & \cmark & \cmark & & \cmark & \cmark \\
\bottomrule
\end{tabular}
\caption{Capabilities summary. 
\textbf{B}: Binary, \textbf{C}: Categorical, \textbf{$\mathbb{R}$}: Continuous. 
\textbf{1}: Single-attribute, \textbf{M}: Multi-attribute. 
\cmark: Supported, $\sim$: Limited.}
\label{tab:bias_methods}
\end{table}

To have a rigorous comparison, we benchmark against seven baselines: GDRO \citep{sagawa2019distributionally}, adversarial learning \citep{ganin2016domain,wang2019balanced,adeli2021representation}, and three recent dependence-penalization methods: HSCIC \citep{quinzan2022learning}, CIRCE \citep{pogodin2022efficient}, and c-MMD \citep{kaur2022modeling,makar2022fairness,veitch2021counterfactual}. We further adapt two models from the domain generalization field, namely Fishr \citep{rame2022fishr} and IRM \citep{arjovsky2019invariant}, to operate on groups, exactly as GDRO does. All methods share the same backbone (ResNet-18, pretrained on ImageNet \citep{he2016deep,deng2009imagenet} for real images, untrained for synthetic datasets). For the Blob dataset, we use a smaller ResNet adapted to low resolution. For MNLI, we use the pretrained TinyBERT from \citet{jiao2020tinybert} as the backbone.

\begin{table*}[t]
\centering
\caption{Performance of all models across 6 datasets. 
Rows 1–3 (shaded) show naive baselines: 
\textit{Backbone (Bias)} = lower bound, 
\textit{Backbone (No Bias)} = upper bound (training data contains no biases, not applicable for TinyBERT and MNLI, see Appendix \ref{app:sub_mnli}).
Regression datasets are reported with R$^2$, classification with Balanced Accuracy or Worst Group Accuracy (WGA) (Appendix \ref{app:sub_mnli}).
Values are mean $\pm$ standard deviation across runs. Train sets are biased (except for No Bias Backbones). Test sets are bias free. $\dagger$: see Appendix \ref{app:cam_ready}.}
\small
\setlength{\tabcolsep}{6pt}
\renewcommand{\arraystretch}{0.8}
\begin{tabular}{lcccccc}
\toprule
Model & 
dSprites (R$^2$) & 
Blob (R$^2$) & 
YaleB (BAcc) & 
FairFace (BAcc) & 
Waterbirds (BAcc) &
MNLI (WGA) \\
\midrule
\rowcolor{gray!15} ResNet (Bias)   & 0.417 $\pm$ 0.037 & 0.281 $\pm$ 0.017 & 0.607 $\pm$ 0.012 & 0.744 $\pm$ 0.010 & 0.730 $\pm$ 0.032 & -- \\
\rowcolor{gray!15} ResNet (No Bias) & 0.757 $\pm$ 0.023 & 0.889 $\pm$ 0.001 & 0.976 $\pm$ 0.009 & 0.904 $\pm$ 0.007 & 0.908 $\pm$ 0.008 & -- \\
\rowcolor{gray!15} TinyBERT (Bias) & -- & -- & -- & -- & -- & 0.616 $\pm$  0.010 \\
\midrule
DISCO$_m$   & \textbf{0.688 $\pm$ 0.018} & \textbf{0.759 $\pm$ 0.015} & \textbf{0.804 $\pm$ 0.027} & \textbf{0.860 $\pm$ 0.006} & 0.867 $\pm$ 0.009 & 0.751 $\pm$  0.007 \\
sDISCO$^\dagger$   & \underline{0.687 $\pm$ 0.016} & \underline{0.751 $\pm$ 0.014} & \underline{0.770 $\pm$ 0.012} & \underline{0.850 $\pm$ 0.010} & \underline{0.874 $\pm$ 0.008} & \textbf{0.760 $\pm$  0.007} \\
Adversarial   & 0.467 $\pm$ 0.034 & 0.647 $\pm$ 0.046 & 0.692 $\pm$ 0.011 & 0.846 $\pm$ 0.006 & 0.845 $\pm$ 0.017 & 0.356 $\pm$  0.058 \\
GDRO          & --              & --              & --              & 0.840 $\pm$ 0.012 & 0.865 $\pm$ 0.009 & \underline{0.755 $\pm$  0.010} \\
c-MMD    & --              & --              & --              & 0.824 $\pm$ 0.006 & 0.853 $\pm$ 0.019 & 0.627 $\pm$  0.007 \\
CIRCE         & 0.606 $\pm$ 0.021 & 0.748 $\pm$ 0.015 & 0.681 $\pm$ 0.026 & 0.822 $\pm$ 0.006 & \textbf{0.894 $\pm$ 0.010} & 0.516 $\pm$  0.069 \\
HSCIC         & 0.568 $\pm$ 0.033 & 0.470 $\pm$ 0.022 & 0.708 $\pm$ 0.035 & 0.748 $\pm$ 0.013 & 0.853 $\pm$ 0.012 & 0.716 $\pm$  0.020\\
IRM & -- & -- & -- & 0.821 $\pm$ 0.017 & 0.858 $\pm$ 0.009 & 0.738 $\pm$  0.008 \\
Fishr & -- & -- & -- & 0.727 $\pm$ 0.004 & 0.814 $\pm$ 0.012 & 0.645 $\pm$  0.017 \\
\bottomrule
\end{tabular}
\label{tab:results}
\end{table*}

To ensure a fair comparison, we assembled the hyperparameter grid ranges for each method from existing work. We additionally gave most methods a larger hyperparameter grid than to our own ones. And some methods, especially IRM and Fishr, needed some adjustment to fit our non-environment bias mitigation setup with groups. We describe all details in the appendix \ref{app_sub:baselines} and the hyperparameter grids can be found in Tab. \ref{tab:hyperparameters}.

\begin{figure}[t]
    \centering
    \begin{minipage}[c]{0.4\textwidth}
        \centering
        \captionof{table}{Counterfactual sensitivity measures.}
        \label{tab:sensitivity}
        \footnotesize 
        \setlength{\tabcolsep}{3pt} 
        \begin{tabular}{lccc}
            \toprule
            Model & $S_X \downarrow$ & $S_Y \downarrow$ & $Acc_{ctf} \uparrow$ \\
            \midrule
            sDISCO$_{X,Y}$ & \textbf{0.066} & \textbf{0.067} & \textbf{0.98} \\
            sDISCO$_{X}$   & 0.117 & 0.154 & 0.83 \\
            ResNet         & 0.302 & 0.275 & 0.65 \\
            \bottomrule
        \end{tabular}
    \end{minipage}%
    \hfill %
    \begin{minipage}[c]{0.4\textwidth}
        \centering
        \includegraphics[width=0.3\linewidth]{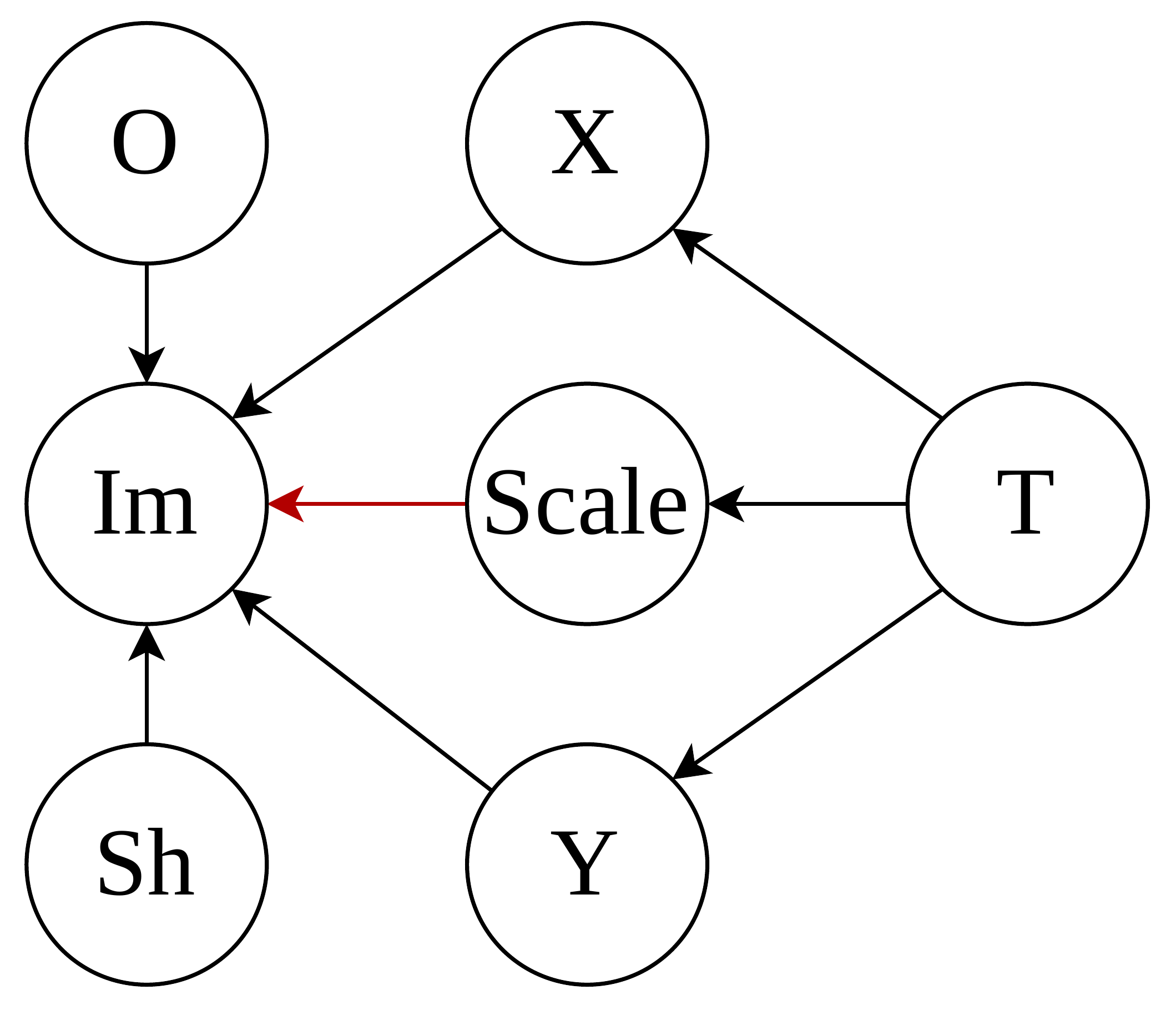}
        \captionof{figure}{Multi Bias Scenario dSprites.}
        \label{fig:multi_bias_dsprites}
    \end{minipage}
\end{figure}

\subsection{Results}

The first obvious advantage of our DISCO variants are their applicability for regressions and classification tasks, while being able to handle multi-variable, multi-type bias variable scenarios efficiently (see Tab.~\ref{tab:bias_methods}).

Tab.~\ref{tab:results} shows that the base backbone networks without debiasing suffer from massive performance drops on the balanced test set. The ``No Bias'' baselines show what the backbones could theoretically achieve when the training data would have been unbiased in the first place. We further see that DISCO$_m$ and sDISCO consistently outperform most baselines, and are competitive with or superior to GDRO and CIRCE. DISCO$_m$ achieves the best results on dSprites and FairFace, while sDISCO ranks second. CIRCE is strong on Waterbirds, which might be explained by the higher hyperparameter search budget we assigned to it. HSCIC and adversarial training generally lag behind. For MNLI, sDISCO performs best.

To ensure a rigorous and fair comparison, we established the hyperparameter search spaces for all baseline methods based on prior literature. Notably, we allocated broader search grids to most baselines compared to those evaluated for our DISCO variants. Furthermore, certain methods, specifically IRM and Fishr, required adaptations to accommodate our group-based, rather than environment-based, bias mitigation framework. Comprehensive details regarding these modifications are provided in Appendix \ref{app_sub:baselines}, and the complete hyperparameter grids are summarized in Table \ref{tab:hyperparameters}.

\subsection{Path Analysis and Unobserved Bias}

While classical bias mitigation experiment results are reported on an unbiased test set, one can also perform pathway analysis if we have access to the data-generating process. These pathway-specific effects allow us more in-depth insights into the causal stability of the tested methods. To illustrate the analytic power of SAM, we conduct pathway-specific counterfactual analysis in a controlled setting on the dSprites dataset, see (Figure~\ref{fig:multi_bias_dsprites}), as we have full control over the dataset, including the generation of true counterfactuals. We compare a naive ResNet to two variants of sDISCO: one aware of both bias variables (sDISCO\(_{XY}\)) and one aware of only one bias (sDISCO\(_{X}\)), thus the latter model will be simulated in an unobserved confounder setting. We visualized some counterfactuals in Fig. \ref{fig:counterfactuals}. We predict $Scale$ in this setting after we binarize it (large vs. small).

Formally, we quantify model sensitivity to counterfactual changes of a bias variable $X$ via the following measure:
\begin{equation}
\begin{split}
S_X(\theta) &:= \mathbb{E}_{u \sim U}\, \mathbb{E}_{x \sim \mathrm{Unif}(\mathrm{supp}(X))} \Big[ \\
&\quad \big| P(\hat{Scale}(u);\theta_m) - P(\hat{Scale}_x(u);\theta_m) \big| \Big],
\end{split}
\end{equation}
and analogously define $S_Y(\theta)$ for the variable $Y$. Intuitively, $S_X$ and $S_Y$ measure the average discrepancy in predictions under counterfactual interventions on $X$ or $Y$, respectively (lower is better). In addition, we report counterfactual accuracy,
\begin{equation}
\begin{split}
Acc_{ctf} &:= \mathbb{E}_{u \sim U}\, \mathbb{E}_{s \sim \mathrm{Unif}(\mathrm{supp}(Scale))} \Big[ \\
&\quad \mathbf{1}\big\{s = \hat{Scale}_s(u)\big\}\Big],
\end{split}
\end{equation}
which evaluates whether predictions remain consistent with ground-truth outcomes under counterfactual changes of the causally relevant target variable $Scale$.

Note that our methods still work solely on observational data. But with the help of this controlled experiment on dSprites, we can even simulate true counterfactuals, and show, that our proposed method actually provides causal stability in terms of counterfactuals.

Counterfactual sensitivity (Tab.~\ref{tab:sensitivity}) reveals that ResNet predictions are highly sensitive to both spurious paths, while sDISCO\(_{XY}\) nearly eliminates such effects and achieves high counterfactual accuracy. Even sDISCO\(_{X}\), which only observes one of the two biases, substantially reduces sensitivity on both paths, though residual effects remain. This aligns with our theoretical results: unknown biases cannot be removed, but blocking all known biases is the best we can and should do. In the appendix \ref{app_sub:ctf_dsprites_main}, we further highlight how this pathway analysis can be used to understand why some models failed in learning causally stable relationships in detail in an additional experiment.

\section{Conclusion}

We introduced the Standard Anti-Causal Model (SAM), a unifying framework to analyze bias mechanisms in prediction tasks. From SAM, we derived a conditional independence criterion that guarantees \emph{causal stability}, ensuring predictions rely solely on stable direct effects.  

To optimize towards causal stability, we proposed two new estimators of conditional distance correlation: DISCO$_m$ and sDISCO. While distance correlation had previously been computationally impractical, our methods make it efficient, differentiable, and scalable to deep learning. DISCO$_m$ provides accurate estimation, while sDISCO offers a single-shot, highly efficient alternative.  

Across six datasets with diverse bias structures, DISCO$_m$ and sDISCO consistently outperform or are competitive against state-of-the-art baselines, while requiring fewer hyperparameters and supporting arbitrary target/bias types. Finally, SAM enables pathway-specific counterfactual analysis, providing deeper insight into model behavior under interventions.

\section{Limitations and Ethical Considerations}
\label{sec:limitations}
\textbf{Theoretical Limitations.} As formally analyzed in Section \ref{sec:assumptions}, our theoretical guarantees are fundamentally conditional on the assumptions of the Standard Anti-Causal Model (SAM). Violating these core assumptions requires further theoretical or architectural adjustments, as standard application will yield an incomplete mitigation of spurious effects.

\textbf{Practical Limitations.} In practice, DISCO estimators operate within the informed bias mitigation regime, requiring that bias variables be explicitly observed and labeled during training. While this is a feasible and standard requirement in some high-stakes domains, such as fairness auditing or medical imaging, it remains a general limitation for broader, unconstrained deployment. Extending bias mitigation efficiently to fully unsupervised scenarios remains an important direction for future work.

\textbf{Ethical Considerations.} To evaluate our framework in high-stakes, real-world contexts, we utilized the FairFace dataset \citep{karkkainen2021fairface} to demonstrate how naive predictors aggressively exploit demographic shortcuts when left unaddressed. In these experiments, we predict ``labeled sex,'' which reflects externally assigned visual categorizations provided by the dataset; we explicitly distinguish this from the distinct, internal, and complex concept of gender. Furthermore, we categorically reject the dataset's original labeling of ``race.'' We view ``race'' as an ontologically unstable construct that lacks scientific grounding and reproduces historically problematic divides. Because the dataset's ``Black'' label physically corresponds most consistently to darker skin tones, we reinterpret and relabel this attribute strictly as ``skin tone.'' This physical proxy mapping is not an endorsement of the original taxonomy, but rather a necessary, pragmatic step to effectively isolate and mitigate a highly problematic bias.

\section*{Acknowledgment}
This paper is supported by the DAAD programme Konrad Zuse Schools of Excellence in Artificial Intelligence, sponsored by the German Federal Ministry of Research, Technology and Space. We gratefully acknowledge the computational resources provided by the Leibniz Supercomputing
Centre.

\section*{Impact Statement}
This paper presents work whose goal is to advance the field of machine learning by improving model fairness and robustness against spurious correlations. Because our proposed bias mitigation estimators operate within an informed bias mitigation regime, they require the explicit observation and labeling of bias variables during training. When applied to human-centric domains, this necessitates the careful and ethical categorization of sensitive demographic data. We explicitly address the broader societal consequences and limitations of this requirement, specifically regarding the ontological instability of demographic labels and the distinction between visual categorization and internal identity, within Section \ref{sec:limitations}.

\bibliography{example_paper}
\bibliographystyle{icml2026}

\newpage
\appendix
\onecolumn

\section*{Appendix}
\setcounter{figure}{0}  
\renewcommand{\thefigure}{\thesection.\arabic{figure}}

\section{Use of LLMs}
We used LLMs for this manuscript to polish our writing on a sentence and paragraph level. It was used to intelligently find and eliminate spelling mistakes, improve sentence structure, and to refine the flow of text when necessary.

The used tools are: Grammarly (\url{https://www.grammarly.com/}), ChatGPT from OpenAI (\url{https://chatgpt.com/}), and Gemini (\url{https://gemini.google.com/}).

\section{Camera-Ready Changes}
\label{app:cam_ready}

Following the constructive feedback from Reviewer Uimw regarding the presentation of our theoretical claims, we refactored the proofs for our estimators to provide a more direct and rigorous derivation. As detailed in Appendix \ref{app:sdisco_consistency}, this led to the formalization of the exact algebraic factorization for the sDISCO variant. During this rigorous refactoring process, we discovered a discrepancy in our previous mathematical formulation, which had unfortunately also propagated into our initial code implementation for sDISCO. We have since corrected the implementation to strictly align with the exact mathematical factorization proven in this appendix. The previous results, generated by the prior implementation, are provided in Table \ref{tab:results_old} for transparency and completeness. However, all main-text results (e.g., Table \ref{tab:results}) now reflect the corrected, mathematically faithful sDISCO implementation, which inherently yields stronger empirical performance.

\begin{table*}[h]
\centering
\caption{Old performance table, where sDISCO has implementation errors. 
Rows 1–3 (shaded) show naive baselines: 
\textit{Backbone (Bias)} = lower bound, 
\textit{Backbone (No Bias)} = upper bound (training data contains no biases, not applicable for TinyBERT and MNLI, see Appendix \ref{app:sub_mnli}).
Regression datasets are reported with R$^2$, classification with Balanced Accuracy or Worst Group Accuracy (WGA) (Appendix \ref{app:sub_mnli}).
Values are mean $\pm$ standard deviation across runs. Train set is biased (except for No Bias Backbones). Test set is always bias free.}
\small
\setlength{\tabcolsep}{6pt}
\renewcommand{\arraystretch}{0.8}
\begin{tabular}{lcccccc}
\toprule
Model & 
dSprites (R$^2$) & 
Blob (R$^2$) & 
YaleB (BAcc) & 
FairFace (BAcc) & 
Waterbirds (BAcc) &
MNLI (WGA) \\
\midrule
\rowcolor{gray!15} ResNet (Bias)   & 0.417 $\pm$ 0.037 & 0.281 $\pm$ 0.017 & 0.607 $\pm$ 0.012 & 0.744 $\pm$ 0.010 & 0.730 $\pm$ 0.032 & -- \\
\rowcolor{gray!15} ResNet (No Bias) & 0.757 $\pm$ 0.023 & 0.889 $\pm$ 0.001 & 0.976 $\pm$ 0.009 & 0.904 $\pm$ 0.007 & 0.908 $\pm$ 0.008 & -- \\
\rowcolor{gray!15} TinyBERT (Bias) & -- & -- & -- & -- & -- & 0.616 $\pm$  0.010 \\
\midrule
DISCO$_m$   & \textbf{0.688 $\pm$ 0.018} & \underline{0.759 $\pm$ 0.015} & \textbf{0.804 $\pm$ 0.027} & \textbf{0.860 $\pm$ 0.006} & \underline{0.867 $\pm$ 0.009} & \underline{0.751 $\pm$  0.007} \\
sDISCO   & \underline{0.619 $\pm$ 0.011} & \textbf{0.788 $\pm$ 0.017} & 0.694 $\pm$ 0.020 & \underline{0.854 $\pm$ 0.005} & 0.863 $\pm$ 0.006 & 0.730 $\pm$  0.020 \\
Adversarial   & 0.467 $\pm$ 0.034 & 0.647 $\pm$ 0.046 & 0.692 $\pm$ 0.011 & 0.846 $\pm$ 0.006 & 0.845 $\pm$ 0.017 & 0.356 $\pm$  0.058 \\
GDRO          & --              & --              & --              & 0.840 $\pm$ 0.012 & 0.865 $\pm$ 0.009 & \textbf{0.755 $\pm$  0.010} \\
c-MMD    & --              & --              & --              & 0.824 $\pm$ 0.006 & 0.853 $\pm$ 0.019 & 0.627 $\pm$  0.007 \\
CIRCE         & 0.606 $\pm$ 0.021 & 0.748 $\pm$ 0.015 & 0.681 $\pm$ 0.026 & 0.822 $\pm$ 0.006 & \textbf{0.894 $\pm$ 0.010} & 0.516 $\pm$  0.069 \\
HSCIC         & 0.568 $\pm$ 0.033 & 0.470 $\pm$ 0.022 & \underline{0.708 $\pm$ 0.035} & 0.748 $\pm$ 0.013 & 0.853 $\pm$ 0.012 & 0.716 $\pm$  0.020\\
IRM & -- & -- & -- & 0.821 $\pm$ 0.017 & 0.858 $\pm$ 0.009 & 0.738 $\pm$  0.008 \\
Fishr & -- & -- & -- & 0.727 $\pm$ 0.004 & 0.814 $\pm$ 0.012 & 0.645 $\pm$  0.017 \\
\bottomrule
\end{tabular}
\label{tab:results_old}
\end{table*}

\section{Contextualization and Related Work}
\label{app:context_and_rel_work}
We propose both an analytical causal framework to understand anti-causal prediction from a path-way analysis point of view, and also provide a solution to estimate bias free predictors, as well.

One can thus interpret our work as an candidate of causal representation learning (CRL) methods, or by a causally motivated shortcut/bias mitigation method.

\subsection{SAM and DISCO in Context of CRL}
\label{app:crl_context}
Our work aligns with the central objective of Causal Representation Learning (CRL), which is to learn representations that are robust and generalize across environments by leveraging causal principles. While CRL is a broad field, distinct sub-goals exist, such as: (a) \textit{Generative and Disentangled CRL}, which aims to learn the full causal generative process or disentangle independent causal mechanisms \citep{sanchez2022diffusion,komanduri2023learning}; (b) \textit{Latent Confounder CRL}, which focuses on inferring causal effects when key confounders are unobserved \citep{wang2024vision,louizos2017causal,kompa2022deep}; and (c) \textit{Causal Discovery}, which attempts to infer the graph structure from high-dimensional data \citep{lagemann2023deep}. These are just very limited sub-fields, we note that there are many more highly relevant CRL subfields \citep{scholkopf2021toward}.

Our framework occupies a specific and distinct niche: \textbf{Discriminative CRL with observed bias}. Unlike generative approaches, we do not attempt to model the full data-generating process or reconstruct the input. Instead, we demonstrate that for the specific problem of robust prediction, one can avoid the complexity of full generative modeling by adhering to a three-step pipeline:
\begin{enumerate}
    \item \textbf{Modeling:} We propose the Standard Anti-Causal Model (SAM) to formally characterize the data-generating process and its biases $B$.
    \item \textbf{Criterion:} From SAM, we derive a formal observational criterion, \textit{Causal Stability} ($\hat{Y} \perp B \mid Y$), which we prove is sufficient to ensure that counterfactual indirect and spurious effects vanish, isolating the stable direct effect.
    \item \textbf{Estimation:} We develop DISCO as an efficient regularization technique to optimize deep models toward this causal criterion.
\end{enumerate}

A key distinction of our approach is the assumption that the bias $B$ is observed. While some CRL methods operate under latent confounding, the assumption of known attributes is foundational to the specific subfield of bias mitigation and is highly realistic in many high-stakes applications. These include fairness auditing (where sensitive attributes like age or sex are collected for compliance), scientific and industrial settings (where metadata regarding sensors or collection times is available), and standard benchmarks explicitly designed to model known spurious correlations (e.g., Waterbirds, FairFace).

\subsection{SAM and DISCO in Context of Shortcut Mitigation}
\label{app:shortcut_context}
\subsubsection{SAM}
Our work provides a structural foundation that generalizes and clarifies findings from prior influential works in shortcut mitigation, specifically those of \citet{makar2022causally,makar2022fairness,veitch2021counterfactual,puli2021out}.

\textbf{Comparison to \citet{makar2022causally,makar2022fairness}:} \citet{makar2022fairness} connect risk invariance (a robustness criterion) with separation ($f(X) \perp B \mid Y$) (a fairness criterion). Their analysis operates primarily at the \textit{distributional} level (associated with Rung 1 of Pearl's Ladder of Causation), describing \textit{what} statistical property a robust model should possess. In contrast, SAM operates at the \textit{structural and counterfactual} level (Rung 2 and 3). By making our assumptions explicit via a Structural Causal Model, we analyze the flow of information along specific causal pathways. Since structural guarantees imply distributional ones, our work provides the deeper causal mechanism for \textit{why} the separation principle works: blocking specific spurious paths in the graph necessitates the resulting distributional independence.

\textbf{Comparison to \citet{veitch2021counterfactual}:} \citet{veitch2021counterfactual} formalize stress tests using a notion of \textit{unit-level counterfactual invariance}, requiring $f(X(z)) = f(X(z'))$. While theoretically robust, this criterion relies on unobserved potential outcomes and is often impractical to enforce directly. \citet{veitch2021counterfactual} show that observational conditional independence is a \textit{necessary} but not \textit{sufficient} signature for this invariance. Our work bridges this gap by defining a more practical notion of invariance tied to observable attributes and path-specific mechanisms within SAM. In our setting, we prove that the conditional independence criterion is \textit{sufficient} to block the transmission of spurious information through specific causal pathways, making the guarantee attainable for practitioners.

\textbf{Comparison to \citet{puli2021out}:} \citet{puli2021out} (NURD) argue that enforcing conditional independence is too restrictive. They provide examples where an optimal representation $r(x)$ must depend on the nuisance $z$ to maximize mutual information with the label. However, this critique applies to \textit{generative} or \textit{representational} goals, where the aim is to retain as much information about the input $X$ as possible.
\begin{enumerate}
    \item \textbf{Discriminative vs. Generative Targets:} DISCO is designed for a discriminative task. We apply the independence constraint to the \textit{final prediction} $\hat{Y}$, not to an intermediate representation $r(x)$. We allow the model's internal layers to utilize bias information if necessary, provided the final output is purged of spurious signal.
    \item \textbf{Stability vs. Information:} In the example provided by \citet{puli2021out}, the "optimal" representation depends on the nuisance. Consequently, a predictor built on such a representation would also depend on the nuisance. By definition, this results in a predictor that is not causally stable against shifts in that nuisance.
\end{enumerate}
Our objective (maximizing the stable direct effect, $ctf\text{-}DE$) explicitly accepts the trade-off that we do not wish to model the full input $X$. Instead, we isolate the path $Y \to X \to \hat{Y}$ while forcing counterfactual indirect ($ctf\text{-}IE$) and spurious ($ctf\text{-}SE$) effects to zero. This ensures that the model utilizes the bias $B$ only insofar as it helps extract the stable direct signal, providing a guarantee that is aligned with robust discriminative prediction rather than generative reconstruction.

\subsubsection{DISCO}
Finally, within the realm of bias mitigation, we want to further highlight that the causal analysis is not our only contribution. We additionally propose the efficient conditional independence penalty via conditional distance correlation. While we compare against classical baselines such as GDRO \citep{sagawa2019distributionally}, IRM \citep{arjovsky2019invariant}, Fishr \citep{rame2022fishr}, and adversarial bias mitigation \citep{ganin2016domain,wang2019balanced,adeli2021representation}, we especially include the modern approaches based on Reproducing Kernel Hilbert Spaces (RKHs) that also penalize conditional dependence directly. In this category we compare against conditional MMD, called C-MMD \citep{kaur2022modeling,makar2022fairness,veitch2021counterfactual}, conditional Hilbert-Schmidt Independence Criterion (HSCIC) \citep{quinzan2022learning}, and CIRCE \citep{pogodin2022efficient} (an efficient variant of HSCIC).

We especially want to highlight that our sDISCO and DISCO$_m$ estimators, together with adversarial bias mitigation, HSCIC and CIRCE, are extremely flexible, as they can be applied to any types of targets and bias attributes, while all the other methods, including the classical gold-standard baselines, always require categorical attributes.

Our methods have significantly less hyperparameters than CIRCE and HSCIC, while being more flexible than the classical methods and consistently providing the best or second best results in our 6 different datasets with different data, bias, and target types.

\section{Causal Graph and Assumptions}
\label{app:causalgraph_asm}
\begin{figure}[ht]
\centering
  \centering
  \includegraphics[scale=0.05]{figures/sam_graph_final.pdf}
  \caption{The main setting we consider in anti-causal prediction. Y is the target we want to predict using X only. $\mathbf{W}$ is a set of variables that only permits mediated effects from Y to X. $\mathbf{Z}$ contains nodes that lie on open paths between Y and X but are non-directed, i.e. any kind of nodes that lie on open back-door paths between $X$ and $Y$, or any kinds of paths that are opened due to conditioned colliders. Nodes in $\mathbf{Z}$ are allowed to have arbitrary connections in between them, as long as $\mathbf{Z}$ stays a valid adjustment set for (Y,X), and the graph is still a directed acyclic graph (DAG). Nodes in $\mathbf{W}$ are mediators. Nodes in $\mathbf{S}$ are variables that are d-separation irrelevant between Y and X (and Y and $\hat{Y}$ respectively). We can always ignore the variables in $\mathbf{S}$ for our settings as they do not harm or inform anything for our prediction setup. Thus, these variables are omitted in our analysis and discussions in the main manuscript.}
  \label{fig:detailed_graph}
\end{figure}%

\subsection{Valid Adjustment Set}
\label{app_sub:valid_adj}
\textbf{Characterization of Valid Adjustment Sets \citep{shpitser2012validity}}
Let \( Z \) be a set of nodes in a causal graph. Then, \( Z \) is a valid adjustment set for \( (Y, X) \) (implying \( Y \cap X = \text{pa}_{{Y}} \cap X = \emptyset \)) if and only if it satisfies the following conditions:
\begin{enumerate}
    \item[(i)] \( Z \) contains no node \( R \notin Y \) on a proper causal path from \( Y \) to \( X \) nor any of its descendants in \( G_{Y} \),
    \item[(ii)] \( Z \) blocks all non-directed paths from \( Y \) to \( X \).
\end{enumerate}

\section{Causal Invariance as Conditional Independence}
In the following, we will prove the propositions in Sec.\ref{sec:causal_invariance}.
\subsection{Total Variance Decomposition}
\label{app_sub:proof_1}
\textbf{Proposition \ref{prop:tv_decomp} (TV-Decomposition)} The TV can be decomposed into direct, indirect, and spurious components by
    \begin{align}
        TV_{y_0,y_1}(\hat{y}) &= ctf\mh DE_{y_0,y_1}(\hat{y}|y) - ctf\mh IE_{y_1,y_0}(\hat{y}|y) - ctf\mh SE_{y_1,y_0}(\hat{y}).
    \end{align}

\begin{proof}
    We first show that $ctf\mh TE_{y_0,y_1}(\hat{y}|y_0)$ can be split into its direct and indirect components as given in definition \ref{def:tv}.
    \begin{align}
        ctf\mh TE_{y_0,y_1}(\hat{y}|y_0) &= P(\hat{y}_{y_1} | y_0) - P(\hat{y}_{y_0} | y_0) \\
        &= P(\hat{y}_{y_1} | y_0) - P(\hat{y}_{y_1,w_{y_0}} | y_0) +  P(\hat{y}_{y_1,w_{y_0}} | y_0)- P(\hat{y}_{y_0} | y_0) \\
        &= ctf\mh DE_{y_0,y_1}(\hat{y}|y_0) - ctf\mh IE_{y_1,y_0}(\hat{y}|y_0).
    \end{align}
    
    We now can finally show that the total variance can be split into total effects and spurious effects, as given in the following:
    \begin{align}
        TV_{y_0,y_1}(\hat{y}) &= P(\hat{y} | y_1) - P(\hat{y} | y_0) \\
        &= P(\hat{y} | y_1) - P(\hat{y}_{y_1} | y_0) + P(\hat{y}_{y_1} | y_0)  - P(\hat{y} | y_0) \\
        &= P(\hat{y}_{y_1} | y_0) - P(\hat{y}_{y_0} | y_0) + P(\hat{y}_{y_1} | y_1) - P(\hat{y}_{y_1} | y_0) \\
        &= ctf\mh TE_{y_0,y_1}(\hat{y}|y_0) - ctf\mh SE_{y_1,y_0}(\hat{y}).
    \end{align}
    
    In total, we arrive at
    \begin{align}
        TV_{y_0,y_1}(\hat{y}) &= ctf\mh DE_{y_0,y_1}(\hat{y}|y_0) - ctf\mh IE_{y_1,y_0}(\hat{y}|y_0) - ctf\mh SE_{y_1,y_0}(\hat{y}). \\
        && \nonumber
    \end{align}
\end{proof}

\begin{proposition}
\label{prop:ctf_strong_weak}
($ctf\mh IE_{y_1,y_0}(\hat{y}|y_0,\mathbf{w},\mathbf{z})$ is stronger than $ctf\mh IE_{y_1,y_0}(\hat{y}|y_0)$)
The expression $ctf\mh IE_{y_1,y_0}(\hat{y}|y_0,\mathbf{w},\mathbf{z})$ is a more fine-grained measure of indirect effect than $ctf\mh IE_{y_1,y_0}(\hat{y}|y_0)$, as the former captures more nuances of effects than the latter one. Whenever $ctf\mh IE_{y_1,y_0}(\hat{y}|y_0,\mathbf{w},\mathbf{z})$ is zero, $ctf\mh IE_{y_1,y_0}(\hat{y}|y_0)$ is zero. The converse is not true.
\end{proposition}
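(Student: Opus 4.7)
The plan is to derive the coarser indirect-effect expression from the finer one by marginalizing over $\mathbf{w}$ and $\mathbf{z}$, and then to exhibit an explicit counterexample to show that the reverse implication fails. For the forward direction I would first apply the law of total probability under the observation $Y=y_0$ to write
\begin{equation*}
P(\hat{y}_{y_1,w_{y_0}}\mid y_0) \;=\; \sum_{\mathbf{w},\mathbf{z}} P(\hat{y}_{y_1,w_{y_0}}\mid y_0,\mathbf{w},\mathbf{z})\, P(\mathbf{w},\mathbf{z}\mid y_0),
\end{equation*}
and analogously for $P(\hat{y}_{y_1}\mid y_0)$. Subtracting these two identities term-by-term yields
\begin{equation*}
ctf\mh IE_{y_1,y_0}(\hat{y}\mid y_0) \;=\; \sum_{\mathbf{w},\mathbf{z}} ctf\mh IE_{y_1,y_0}(\hat{y}\mid y_0,\mathbf{w},\mathbf{z})\, P(\mathbf{w},\mathbf{z}\mid y_0).
\end{equation*}
Thus the coarser effect is a convex combination (a weighted average over the conditioning values) of the finer ones. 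If $ctf\mh IE_{y_1,y_0}(\hat{y}\mid y_0,\mathbf{w},\mathbf{z}) = 0$ for every $(\mathbf{w},\mathbf{z})$ in the relevant support, each summand vanishes and hence so does $ctf\mh IE_{y_1,y_0}(\hat{y}\mid y_0)$.

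For the converse, the plan is to build a minimal counterexample in which the finer effects are non-zero but cancel under marginalization. I would take a binary $W$ (with $\mathbf{Z}=\emptyset$ for simplicity) such that, conditional on $Y=y_0$, each value $w\in\{0,1\}$ occurs with probability $\tfrac{1}{2}$, and arrange the downstream mechanism so that the per-stratum indirect effects satisfy $ctf\mh IE_{y_1,y_0}(\hat{y}\mid y_0,w{=}0) = +c$ and $ctf\mh IE_{y_1,y_0}(\hat{y}\mid y_0,w{=}1) = -c$ for some $c\neq 0$. Plugging this into the identity above gives $ctf\mh IE_{y_1,y_0}(\hat{y}\mid y_0) = \tfrac{1}{2}(+c) + \tfrac{1}{2}(-c) = 0$, while the stratum-level effects are non-zero, establishing that the converse does not hold.

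The main obstacle is the first step, where the nested potential-outcome expression $\hat{Y}_{y_1,w_{y_0}}$ must be handled carefully when conditioning on $\mathbf{W}=\mathbf{w}$ and $\mathbf{Z}=\mathbf{z}$: these are observed (factual) values, while $w_{y_0}$ is a counterfactual value that $W$ would take under $do(Y=y_0)$. Writing out the identity therefore implicitly requires recursive substitution in the underlying SCM so that the marginalization variable $\mathbf{W}$ on the left of the conditioning bar is the observed one, not the intervened one. Once this distinction is made explicit, the law of total probability applies to both terms with the same weighting distribution $P(\mathbf{w},\mathbf{z}\mid y_0)$, the subtraction produces the stated weighted-average identity, and both parts of the proposition follow.
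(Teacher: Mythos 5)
Your proposal is correct and follows essentially the same route as the paper: both apply the law of total probability to express the coarse indirect effect as a $P(\mathbf{w},\mathbf{z}\mid y_0)$-weighted average of the stratum-level effects $ctf\mh IE_{y_1,y_0}(\hat{y}\mid y_0,\mathbf{w},\mathbf{z})$, from which the forward implication is immediate. The only difference is that you sketch an explicit cancellation counterexample ($+c$ and $-c$ strata averaging to zero) for the failure of the converse, whereas the paper defers to counterexamples in the cited literature.
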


\begin{proof}
We can further extend the total variation formula by the total probability theorem, and we arrive at
\begin{align}
    TV_{y_0,y_1}(\hat{y}) &= \sum_{\mathbf{w},\mathbf{z}} ctf\mh DE_{y_0,y_1}(\hat{y}|y_0,\mathbf{w},\mathbf{z})P(\mathbf{w},\mathbf{z}|y_0) \\
    & - \sum_{\mathbf{w},\mathbf{z}} ctf\mh IE_{y_1,y_0}(\hat{y}|y_0,\mathbf{w},\mathbf{z})P(\mathbf{w},\mathbf{z}|y_0) \\
    & - ctf\mh SE_{y_1,y_0}(\hat{y})
\end{align}
It is therefore easy to see that $ctf\mh IE_{y_1,y_0}(\hat{y}|y_0,\mathbf{w},\mathbf{z}) = 0 \implies ctf\mh IE_{y_1,y_0}(\hat{y}|y_0) = 0$, but there converse is not true in general. Proof of the falsity of the converse can be given by simple counterexamples. An interested reader can find some in \citep{plecko2022causal}.
\end{proof}

\subsection{Justification Conditional Independence as Constraint}
\label{app_sub:proof_2}

\begin{figure}[ht]
\centering
    \begin{subfigure}[b]{0.45\textwidth} %
        \centering
        \includegraphics[scale=0.07]{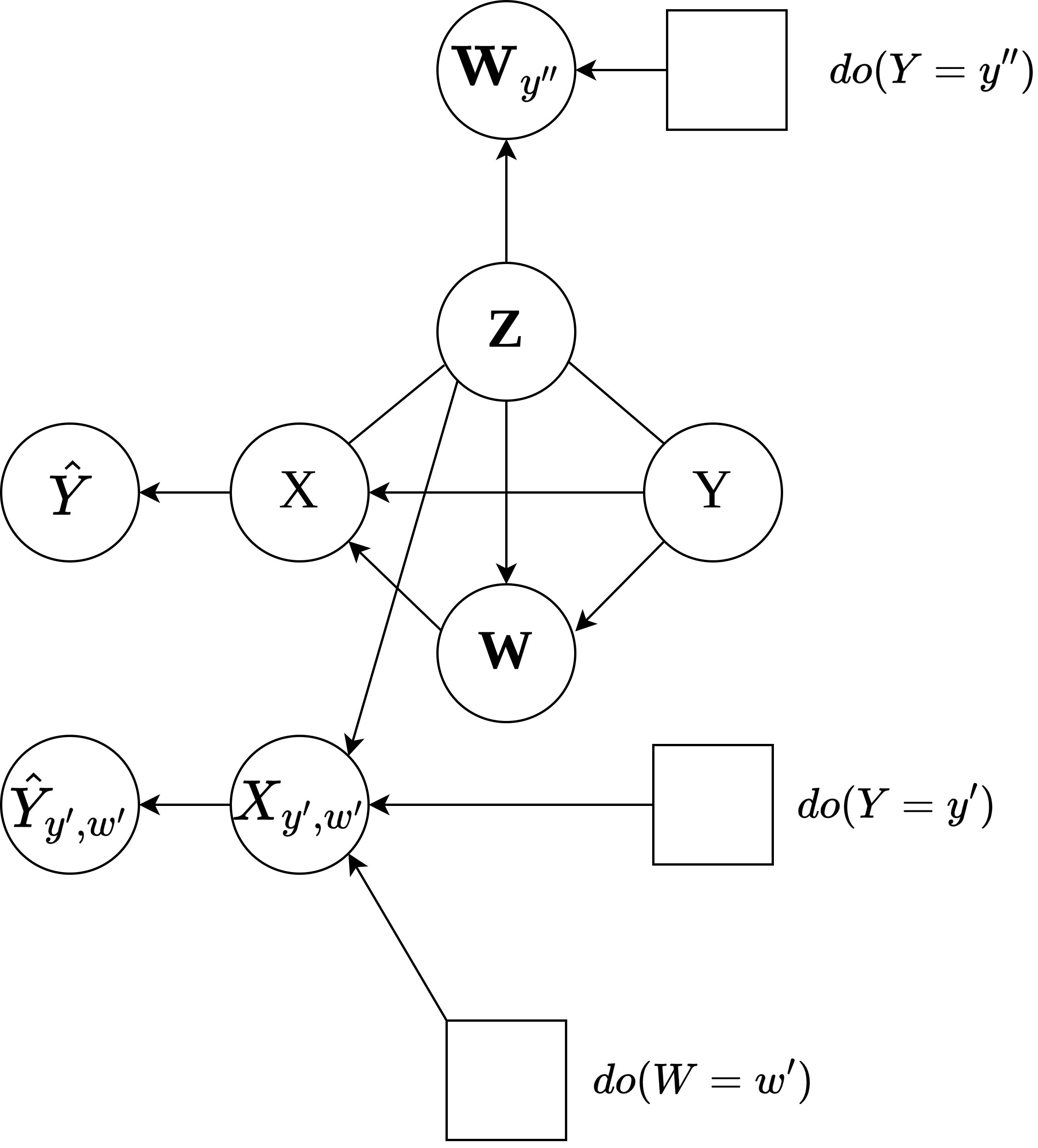}
        \caption{Counterfactual graph 1}
        \label{fig:ctf_graph_1}
    \end{subfigure}
    \hfill %
    \begin{subfigure}[b]{0.45\textwidth} %
        \centering
        \includegraphics[scale=0.07]{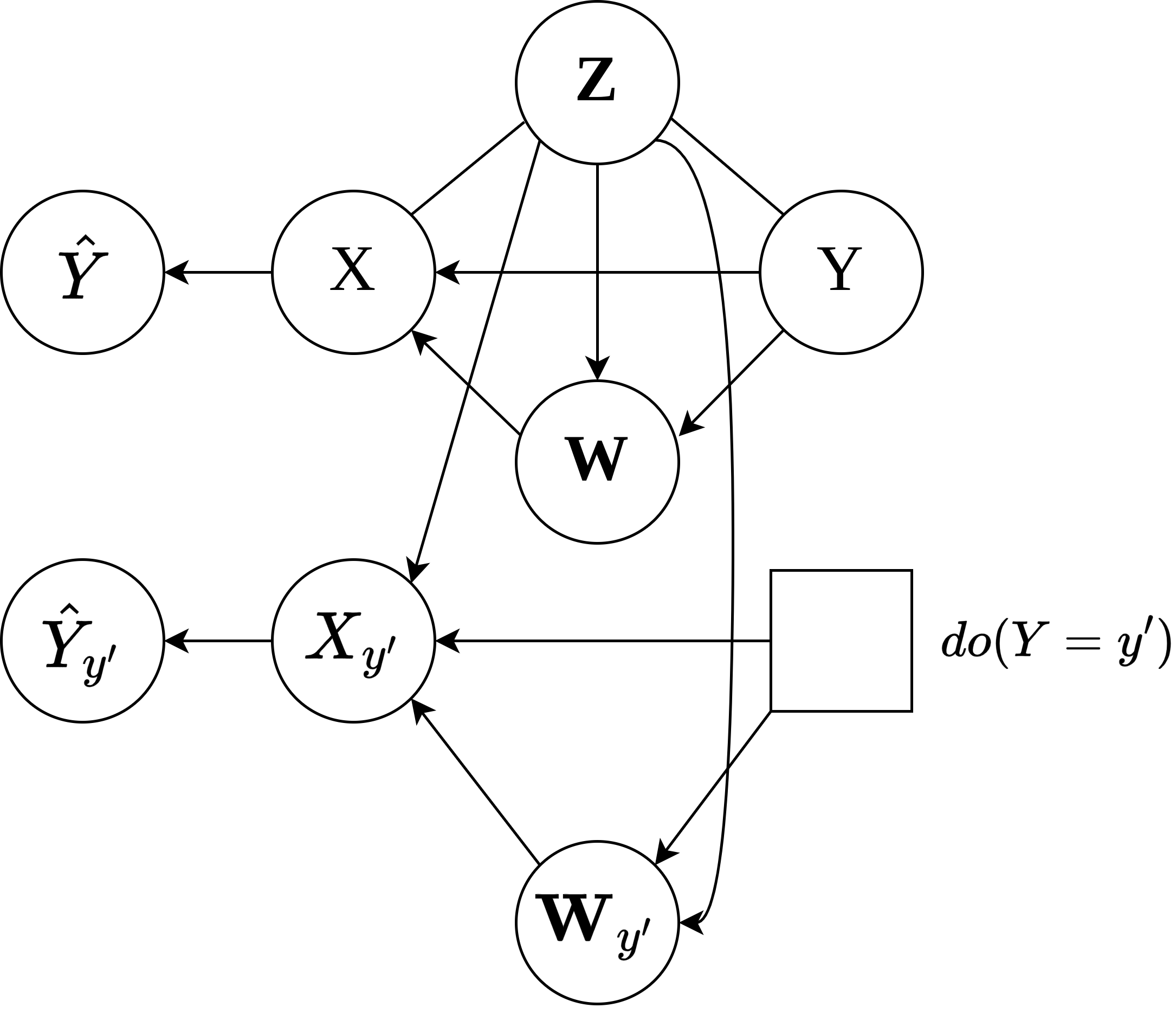} %
        \caption{Counterfactual graph 2}
        \label{fig:ctf_graph_2}
    \end{subfigure}
\caption{Counterfactual Graphs needed for the proofs of propositions 2 and 3.}
\label{fig:ctf_graphs}
\end{figure}%

\textbf{Proposition (Conditional Independence for unbiased Predictors)}
    SAM is given. Assume we have a prediction model $g_{\theta}$ that maps from the input $X$ to our target. Let's treat the predictions of our model as a random variable called $\hat{Y}$. If the prediction satisfies $\hat{Y} \perp \mathbf{W},\mathbf{Z} \mid Y$, then $ctf\mh IE_{y_1,y_0}(\hat{y}\mid y) = ctf\mh SE_{y_1,y_0}(\hat{y}) = 0$ , for any $y,y_0,y_1,\hat{y}$ in their ranges. The conditional independence criterion even ensures a stricter criterion, namely, $\hat{Y} \perp \mathbf{W},\mathbf{Z} \mid Y \implies ctf\mh IE_{y_1,y_0}(\hat{y}\mid y,\mathbf{z},\mathbf{w}) = 0$, for any $y,y_0,y_1,\hat{y},\mathbf{w},\mathbf{z}$ in the ranges of their respective random variables.

Outline
\begin{itemize}
    \item Prop.~\ref{prop:ctf_strong_weak} tells us that $ctf\mh IE_{y_1,y_0}(\hat{y}|y_0,\mathbf{w},\mathbf{z})$ is stronger than $ctf\mh IE_{y_1,y_0}(\hat{y}|y_0)$. Therefore, we will show the sufficiency of the stronger case, which will automatically apply to the weaker one.
    \item We need to derive some (in)dependence criteria based on the graph. We will use the theory from twin graphs (parallel worlds graphs) \citep{balkePearl1994,avin2005identifiability} and their extensions made in the make-cg algorithm \citep{shpitser2012counterfactuals}.
    \item Another useful tool from \citep{correa2021nested} will be used. We will refer to their Theorem 1 as counterfactual unnesting (\emph{ctf-unnesting}) when we use it.
    \item We first show $\hat{Y} \perp \mathbf{W},\mathbf{Z} | Y \implies ctf\mh IE_{y_1,y_0}(\hat{y}|y,\mathbf{z},\mathbf{w}) = 0$
    \item We afterwards show $\hat{Y} \perp \mathbf{W},\mathbf{Z} | Y \implies ctf\mh SE_{y_1,y_0}(\hat{y}) = 0$
\end{itemize}

We begin with the graphical independence criteria. Relevant interventions and observations are in this case $\gamma = (\hat{Y}_{y',\mathbf{w}'},\mathbf{w}_{y''},y,\mathbf{w},\mathbf{z})$. From the graph in Fig. \ref{fig:ctf_graphs}, we can conclude that $\hat{Y}_{y',\mathbf{w}'} \perp \mathbf{W}_{y''},\mathbf{W},Y \mid \mathbf{Z}$. Having these relations, we can prove that  $\hat{Y} \perp \mathbf{W},\mathbf{Z} | Y \implies ctf\mh IE_{y_1,y_0}(\hat{y}|y,\mathbf{z},\mathbf{w}) = 0$.
\begin{proof}
\begin{align*}
    ctf\mh IE_{y_1,y_0}(\hat{y}|y,\mathbf{w},\mathbf{z}) &= P(\hat{y}_{y_1,\mathbf{w}_{y_0}} | y,\mathbf{z},\mathbf{w}) - P(\hat{y}_{y_1} | y,\mathbf{z},\mathbf{w}) \\
    &= P(\hat{y}_{y_1,\mathbf{w}_{y_0}} | y,\mathbf{z},\mathbf{w}) - P(\hat{y}_{y_1,\mathbf{w}_{y_1}} | y,\mathbf{z},\mathbf{w}) \\
    &= \sum_{\mathbf{w}'} P(\hat{y}_{y_1,\mathbf{w}'},\mathbf{W}_{y_0}=\mathbf{w}' | y,\mathbf{z},\mathbf{w}) \\
    &\quad - \sum_{\mathbf{w}''} P(\hat{y}_{y_1,\mathbf{w}''},\mathbf{W}_{y_1}=\mathbf{w}'' | y,\mathbf{z},\mathbf{w})
    && \text{\emph{ctf-unnesting}} \nonumber \\
    &= \sum_{\mathbf{w}'} P(\hat{y}_{y_1,\mathbf{w}'} | y,\mathbf{z},\mathbf{w})P(\mathbf{W}_{y_0}=\mathbf{w}' | y,\mathbf{z},\mathbf{w}) \\
    &\quad -\sum_{\mathbf{w}''} P(\hat{y}_{y_1,\mathbf{w}''} | y,\mathbf{z},\mathbf{w})P(\mathbf{W}_{y_1}=\mathbf{w}'' | y,\mathbf{z},\mathbf{w})
    && \hat{Y}_{y',\mathbf{w}'} \perp \mathbf{W}_{y''} \mid \mathbf{Z} \nonumber \\
    &= \sum_{\mathbf{w}'} P(\hat{y}_{y_1,\mathbf{w}'} | y_1,\mathbf{z},\mathbf{w}')P(\mathbf{w}' | y_0,\mathbf{z})
    && \hat{Y}_{y',\mathbf{w}'} \perp Y,\mathbf{W} \mid \mathbf{Z}, \\
    &\quad -\sum_{\mathbf{w}''} P(\hat{y}_{y_1,\mathbf{w}''} | y_1,\mathbf{z},\mathbf{w}'')P(\mathbf{w}'' | y_1,\mathbf{z})
    && \mathbf{W}_{y''} \perp Y,\mathbf{W} \mid \mathbf{Z} \\
    &= \sum_{\mathbf{w}'} P(\hat{y} | y_1,\mathbf{z},\mathbf{w}')P(\mathbf{w}' | y_0,\mathbf{z}) \\
    &\quad -\sum_{\mathbf{w}''} P(\hat{y} | y_1,\mathbf{z},\mathbf{w}'')P(\mathbf{w}'' | y_1,\mathbf{z}) \\
    &= P(\hat{y} | y_1)\sum_{\mathbf{w}'} P(\mathbf{w}' | y_0,\mathbf{z})
    && \hat{Y} \perp \mathbf{W},\mathbf{Z} | Y \\
    &\quad - P(\hat{y} | y_1)\sum_{\mathbf{w}''}P(\mathbf{w}'' | y_1,\mathbf{z}) \\
    &= 0
\end{align*}
\end{proof}

Next, we prove that $\hat{Y} \perp \mathbf{W},\mathbf{Z} | Y \implies ctf\mh SE_{y_1,y_0}(\hat{y}) = 0$.

\begin{proof}
    \begin{align*}
        SE_{y_1,y_0}(\hat{y}) &= P(\hat{y}_{y_1} \mid y_0) - P(\hat{y}_{y_1} \mid y_1) \\
        &= \sum_\mathbf{z}[P(\hat{y}_{y_1} \mid y_0,\mathbf{z})P(\mathbf{z} \mid y_0) - P(\hat{y}_{y_1} \mid y_1,\mathbf{z})P(\mathbf{z} \mid y_1)] && \text{Law of total Prob.} \\
        &= \sum_\mathbf{z}[P(\hat{y} \mid y_1,\mathbf{z})P(\mathbf{z} \mid y_0) - P(\hat{y} \mid y_1,\mathbf{z})P(\mathbf{z} \mid y_1)] && \hat{Y}_{y_1} \perp Y \mid \mathbf{Z} \\
        &= P(\hat{y} \mid y_1) \sum_\mathbf{z} P(\mathbf{z} \mid y_0) - P(\hat{y} \mid y_1) \sum_\mathbf{z} P(\mathbf{z} \mid y_1) && \hat{Y} \perp \mathbf{W},\mathbf{Z} \mid Y \\
        &= 0
    \end{align*}
\end{proof}

\subsection{Constant Direct Effects given y,y'}
\label{app_sub:proof_3}
\textbf{Proposition} (Constant Direct Effect for any $\mathbf{w},\mathbf{z}$)
    The graph $\mathcal{G}$ and all implied assumptions are given. Assume we have a predictor $h_{\theta_h} \circ g_{\theta_g}$ and its output represented as a random variable $\hat{Y}$, such that $\hat{Y} \perp \mathbf{W},\mathbf{Z} \mid Y$ holds. This predictor will have a constant direct effect for fixed $y_0$,$y_1$, for any $\mathbf{w},\mathbf{z}$. This is compactly expressed as $ctf\mh DE_{y_0,y_1}(\hat{y}|y,\mathbf{w},\mathbf{z}) = ctf\mh DE_{y_0,y_1}(\hat{y}|y,\mathbf{w}',\mathbf{z}')$, for any $y_0, y_1, \mathbf{w},\mathbf{z},\mathbf{w}',\mathbf{z}'$.

\begin{proof}
    \begin{align*}
        ctf\mh DE_{y_1,y_0}(\hat{y}|y,\mathbf{w},\mathbf{z}) &= P(\hat{y}_{y_1,\mathbf{w}_{y_0}} | y,\mathbf{z},\mathbf{w}) - P(\hat{y}_{y_0} | y,\mathbf{z},\mathbf{w}) \\
        &= P(\hat{y}_{y_1,\mathbf{w}_{y_0}} | y,\mathbf{z},\mathbf{w}) - P(\hat{y}_{y_0,\mathbf{w}_{y_0}} | y,\mathbf{z},\mathbf{w}) \\
        &= \sum_{\mathbf{w}'}[P(\hat{y}_{y_1,\mathbf{w}'},\mathbf{W}_{y_0}=\mathbf{w}' | y,\mathbf{z},\mathbf{w}) \\
        &\quad - P(\hat{y}_{y_0,\mathbf{w}'},\mathbf{W}_{y_0}=\mathbf{w}' | y,\mathbf{z},\mathbf{w})]
        && \text{\emph{ctf-unnesting}} \nonumber \\
        &= \sum_{\mathbf{w}'}[P(\hat{y}_{y_1,\mathbf{w}'}| y,\mathbf{z},\mathbf{w})
        && \mathbf{W}_{y''} \perp Y,\mathbf{W} \mid \mathbf{Z} \nonumber \\
        &\quad - P(\hat{y}_{y_0,\mathbf{w}'}| y,\mathbf{z},\mathbf{w})]
        P(\mathbf{W}_{y_0}=\mathbf{w}' | y,\mathbf{z}) \\
        &= \sum_{\mathbf{w}'}[P(\hat{y}| y_1,\mathbf{z},\mathbf{w}')
        && \hat{Y}_{y',\mathbf{w}'} \perp Y,\mathbf{W} \mid \mathbf{Z} \\
        &\quad - P(\hat{y}| y_0,\mathbf{z},\mathbf{w}')]
        P(\mathbf{W}_{y_0}=\mathbf{w}' | y,\mathbf{z}) \\
        &= [P(\hat{y}| y_1) - P(\hat{y}| y_0)]
        \sum_{\mathbf{w}'}P(\mathbf{W}_{y_0}=\mathbf{w}' | y,\mathbf{z})
        && \hat{Y} \perp \mathbf{W},\mathbf{Z} \mid Y \\
        &= P(\hat{y}| y_1) - P(\hat{y}| y_0)
    \end{align*}
\end{proof}

\section{Proofs and Supplementary Definitions for Conditional Distance Correlation Background}
\label{app:disco_theory}

\subsection{Supplementary Definitions}

\label{app:supp_def_theory_disco}

\begin{definition}[Restatement of Definitions: Conditional distance covariance]
Let $P_Z$ denote the probability measure of $Z$. For $z \in \mathcal{Z}$, let $\mu_z := P_{X|Z=z}$ and $\nu_z := P_{Y|Z=z}$ be the respective conditional marginals. For $\theta_z := P_{(X,Y)|Z=z}$, define the distance centered kernels:
\[
d_{\mu_z}(x, x') := d_{\mathcal{X}}(x, x') - a_{\mu_z}(x) - a_{\mu_z}(x') + D(\mu_z),
\]
where $a_{\mu_z}(x) := \int d_{\mathcal{X}}(x, x')\, d\mu_z(x')$, $D(\mu_z) := \int d_{\mathcal{X}}(x, x')\, d\mu_z(x)d\mu_z(x')$, and similarly for $d_{\nu_z}$.

The conditional distance covariance is defined as:
\[
\operatorname{dCov}^2(X, Y \mid Z) := \mathbb{E}_{Z} \left[ \operatorname{dCov}^2(X, Y \mid Z = z) \right],
\]
where for each $z \in \mathcal{Z}$,
\[
\operatorname{dCov}^2(X, Y \mid Z = z) := \int d_{\mu_z}(x, x')\, d_{\nu_z}(y, y')\, dP_{(X,Y)|Z=z}(x, y) dP_{(X,Y)|Z=z}(x', y').
\]
\end{definition}

\begin{definition}[Strong negative type {\citep[§3]{lyons2013distance}}]
A metric space $(\mathcal X,d)$ has \emph{negative type} if whenever two Borel probability measures $\mu_1,\mu_2$ on $\mathcal X$ with finite first moments satisfy
\[
D(\mu_1-\mu_2)\;:=\;\int\!\!\int d(x,x')\,d(\mu_1-\mu_2)(x)\,d(\mu_1-\mu_2)(x') \;\leq\;0.
\]
Here finite first moment means $\int d(o,x)\,d\mu_i(x)<\infty$ for some (hence any) base point $o\in\mathcal X$. The same metric space is of \emph{strong negative type} when equality implies $\mu_1=\mu_2$.
\end{definition}

\begin{definition}[Hilbert embedding {\citep[§3]{lyons2013distance}}]
An \emph{isometric embedding} of $(\mathcal X,d)$ into a real Hilbert space $\mathcal H$ is a map $\varphi:\mathcal X\to\mathcal H$ such that
\[
d(x,x') = \|\varphi(x)-\varphi(x')\|_{\mathcal H}^2
\quad
\forall\,x,x'\in\mathcal X.
\]
By Schoenberg’s theorem \citep{schoenberg1937certain,schoenberg1938metric}, $(\mathcal X,d)$ has negative type if and only if such $\varphi$ exists.
\end{definition}

\begin{definition}[Tensor product space]
If $\varphi:\mathcal X\to\mathcal H_X$ and $\psi:\mathcal Y\to\mathcal H_Y$ are embeddings into real Hilbert spaces, their \emph{tensor‐product embedding}
\[
\varphi\otimes\psi \;:\; \mathcal X\times\mathcal Y
\;\to\;
\mathcal H_X\otimes\mathcal H_Y
\]
is defined on simple tensors by
\[
(\varphi\otimes\psi)(x,y) \;=\; \varphi(x)\,\otimes\,\psi(y),
\]
and extended linearly and by continuity.  The inner product on $\mathcal H_X\otimes\mathcal H_Y$ satisfies
\[
\big\langle u_1\otimes v_1,\;u_2\otimes v_2\big\rangle
=
\langle u_1,u_2\rangle_{\mathcal H_X}\;\langle v_1,v_2\rangle_{\mathcal H_Y}.
\]
\end{definition}

\begin{definition}[Barycenter map {\citep[Prop.~3.1]{lyons2013distance}}]
Given an embedding $\varphi:\mathcal X\to\mathcal H$ and any signed Borel measure $\mu$ on~$\mathcal X$ with finite first moment, define its \emph{barycenter} (or \emph{mean embedding}, compare to maximum mean discrepancy, HSIC, etc., see \citep{schrab2025practical} as
\[
\beta_\varphi(\mu)
:=
\int_{\mathcal X}\varphi(x)\;\mu(dx)
\;\in\mathcal H,
\]
which is well‐defined.
\end{definition}

\subsection{Proof of Theorem (Conditional independence iff zero covariance)}

\begin{remark}[Overloaded notation for \texorpdfstring{$\otimes$}{⊗}]
\label{rem:otimes_overloading}
In this setting, the symbol \(\otimes\) is used in two distinct contexts, following Lyons' notation~\citep{lyons2013distance}. First, for probability measures, as in \(\mu_z \otimes \nu_z\), it denotes the \emph{product measure} on the product space \(\mathcal{X} \times \mathcal{Y}\). Second, for Hilbert-space-valued embeddings, as in \(\varphi(x) \otimes \psi(y)\), it denotes the \emph{tensor product of vectors} in the Hilbert space \(\mathcal{H}_X \otimes \mathcal{H}_Y\). The meaning of \(\otimes\) must therefore be inferred from context, though its use is unambiguous within the proof structure.
\end{remark}

\begin{theorem}[Restatement of Theorem]
Suppose $(\mathcal X,d_{\mathcal X})$ and $(\mathcal Y,d_{\mathcal Y})$ are metric spaces of strong negative type.  Let $(X,Y,Z)$ be random elements on a common probability space, and define for each $z$ the conditional measures $\mu_z=P_{X|Z=z}$, $\nu_z=P_{Y|Z=z}$ and $P_{(X,Y)|Z=z}$ as above.  Then
\[
\mathrm{dCov}^2(X,Y\mid Z)=0
\quad\Longleftrightarrow\quad
P_{(X,Y)\mid Z=z} \;=\; \mu_z\otimes\nu_z
\quad
\text{for }P_Z\text{-a.e.\ }z.
\]
\end{theorem}

\begin{proof}
Choose isometric embeddings
\[
\varphi:\mathcal X\;\hookrightarrow\;\mathcal H_X,
\qquad
\psi:\mathcal Y\;\hookrightarrow\;\mathcal H_Y
\]
into real Hilbert spaces. Thereby automatically follows that each of $\mathcal X,\mathcal Y$ has strong negative type, and moreover so that each barycenter map $\beta_\varphi,\beta_\psi$ is injective on all finite‐moment signed measures {\citep[Lem.~3.9]{lyons2013distance}}. Consider the tensor‐product embedding
\[
\varphi\otimes\psi:\,\mathcal X\times\mathcal Y
\;\to\;
\mathcal H_X\otimes\mathcal H_Y,
\]
and let $\theta_z:=P_{(X,Y)\mid Z=z}$ be the conditional joint law on $\mathcal X\times\mathcal Y$, with marginals $\mu_z,\nu_z$.

Analog to {\citet[Prop.~3.7]{lyons2013distance}}, we have for each $z$ the identity
\[
\mathrm{dCov}^2\bigl(X,Y\mid Z=z\bigr)
\;=\;
4\,\Bigl\|
\beta_{\varphi\otimes\psi}\bigl(\,\theta_z - \mu_z\otimes\nu_z\bigr)
\Bigr\|^2_{\mathcal H_X\otimes\mathcal H_Y}.
\]
Taking expectation in $z\sim P_Z$ gives
\[
\mathrm{dCov}^2(X,Y\mid Z)
=
\mathbb E_Z\; \mathrm{dCov}^2\bigl(X,Y\mid Z=z\bigr)
=
4\,\mathbb E_Z\,
\bigl\|\beta_{\varphi\otimes\psi}(\theta_z-\mu_z\otimes\nu_z)\bigr\|^2.
\]
Since norms are nonnegative and barycenters are well‐defined on all finite‐moment signed measures, it follows that
\[
\mathrm{dCov}^2(X,Y\mid Z)=0
\quad\Longleftrightarrow\quad
\beta_{\varphi\otimes\psi}(\theta_z-\mu_z\otimes\nu_z)=0
\quad
P_Z\text{-a.s.}
\]
Injectivity of $\beta_{\varphi\otimes\psi}$ {\citep[Lem.~3.9]{lyons2013distance}} on signed measures then yields
\[
\theta_z - \mu_z\otimes\nu_z=0
\quad
\Longleftrightarrow
\quad
\theta_z=\mu_z\otimes\nu_z,
\]
for $P_Z$‐almost every $z$, as required.
\end{proof}

\subsection{Consistency of DISCO$_m$}
\label{app:disco_m_consistency}

\begin{proposition}[Consistency of DISCO$_m$]
Let $\{(X_i,Y_i,Z_i)\}_{i=1}^n \overset{\text{i.i.d.}}{\sim} P_{XYZ}$ with finite first moments. Let $K_h$ be a positive-definite kernel on $\mathcal{Z}$ with bandwidth $h$, such that standard kernel regression assumptions hold:
\begin{itemize}
    \item $K$ is bounded, continuous, and integrates to $1$.
    \item $h \to 0$ and $nh^{d_Z}\to\infty$ as $n \to \infty$.
\end{itemize}
Then, as $n \to \infty$ and $m \to \infty$ (where $m \leq n$), the estimator converges in probability to the true conditional distance correlation:
\begin{equation}
\text{DISCO}_m(X,Y \mid Z) \;\xrightarrow{p}\; \operatorname{dCor}^2(X,Y \mid Z).
\end{equation}
\end{proposition}

\begin{proof}
\textbf{Step 1. Pointwise Consistency of the Local Estimator.} \\
For a fixed reference point $Z_i$, the row-normalized kernel weights $w^{(i)}_k = K_h(Z_i, Z_k) / \sum_\ell K_h(Z_i, Z_\ell)$ form the classical Nadaraya-Watson estimator of the conditional density at $Z_i$. As established by \citet{wang2015conditional}, applying these weights to the double-centered distance matrices $A^{(i)}$ and $B^{(i)}$ constitutes a plug-in empirical estimator for the local V-statistic. Under the stated bandwidth conditions, this local estimator converges in probability to the true local conditional distance covariance:
\begin{equation}
\widehat{\mathrm{dCov}}_{\text{DISCO}}(X,Y\mid Z_i) \;\xrightarrow{p}\; \operatorname{dCov}^2(X,Y\mid Z=Z_i).
\end{equation}

\textbf{Step 2. Consistency of the Local Variances.} \\
By symmetry, substituting $Y$ for $X$ or $X$ for $Y$ in the local estimator yields the local conditional distance variances. Consequently, we have:
\begin{align}
\widehat{\mathrm{dCov}}_{\text{DISCO}}(X,X\mid Z_i) &\;\xrightarrow{p}\; \operatorname{dVar}^2(X\mid Z=Z_i), \\
\widehat{\mathrm{dCov}}_{\text{DISCO}}(Y,Y\mid Z_i) &\;\xrightarrow{p}\; \operatorname{dVar}^2(Y\mid Z=Z_i).
\end{align}

\textbf{Step 3. Convergence of the Global Expectation.} \\
The theoretical global conditional distance correlation is defined as the expected value of the local correlations over the marginal distribution of $Z$:
\begin{equation}
\operatorname{dCor}^2(X,Y\mid Z) = \mathbb{E}_Z\!\left[ \frac{\operatorname{dCov}^2(X,Y\mid Z=z)}{\sqrt{\operatorname{dVar}^2(X\mid Z=z)\operatorname{dVar}^2(Y\mid Z=z)}} \right].
\end{equation}
The $\text{DISCO}_m$ estimator approximates this global expectation by uniformly sampling $m$ reference points from the empirical distribution of $Z$ and computing the empirical mean of the local plug-in estimators:
\begin{equation}
\text{DISCO}_m(X,Y\mid Z) = \frac{1}{m}\sum_{i=1}^m 
\frac{\widehat{\mathrm{dCov}}_{\text{DISCO}}(X,Y\mid Z_i)}
{\sqrt{\widehat{\mathrm{dCov}}_{\text{DISCO}}(X,X\mid Z_i)\,
\widehat{\mathrm{dCov}}_{\text{DISCO}}(Y,Y\mid Z_i)}}.
\end{equation}

\textbf{Step 4. Ratio Convergence and the Law of Large Numbers.} \\
Because the numerator and both components of the denominator inside the sum are consistent estimators of their respective population limits (Steps 1 and 2), the continuous mapping theorem (Slutsky's theorem) guarantees that their ratio converges in probability to the true local correlation for any valid $Z_i$. 

Finally, as $m \to \infty$, the empirical average over the $m$ sampled points converges in probability to the true expectation over the marginal distribution of $Z$ by the Weak Law of Large Numbers. Therefore, the global estimator converges to the theoretical global conditional distance correlation.
\end{proof}

\section{Exact Factorization and Proof of sDISCO}
\label{app:sdisco_consistency}
In Section 3.1, we established that the standard evaluation of the conditional distance correlation V-statistic requires computing localized double-centered matrices for each conditioning point. Naively parallelizing this for a batch of size $n$ requires allocating a tensor $\mathcal{T} \in \mathbb{R}^{n \times n \times n}$, yielding an $\mathcal{O}(n^{3})$ memory complexity that is prohibitive for deep learning hardware. Here, we provide the structural proof for the Single-Shot DISCO (sDISCO) estimator. We prove that sDISCO achieves the exact mathematical evaluation of the global V-statistic without ever materializing an $\mathcal{O}(n^{3})$ tensor, strictly confining memory requirements to $\mathcal{O}(n^{2})$ through exact algebraic factorization.

\begin{theorem}[Algebraic Factorization of Conditional Distance Covariance]
Let $A, B \in \mathbb{R}^{n \times n}$ be the pairwise distance matrices for variables $X$ and $Y$. Let $W \in \mathbb{R}^{n \times n}$ be the row-normalized kernel weight matrix, where $W_{ij} = K_{h}(Z_{i}, Z_{j}) / \sum_{k} K_{h}(Z_{i}, Z_{k})$. Let $\mathbf{1} \in \mathbb{R}^{n}$ be a column vector of ones, and let $\circ$ denote the Hadamard (element-wise) product.

The vector of exact local sample conditional distance covariances $\mathcal{V}_{XY} \in \mathbb{R}^{n}$, evaluated at all $n$ reference points simultaneously, is given exactly by:
\begin{equation}
    \mathcal{V}_{XY} = T_{1} + T_{2} - 2T_{3},
\end{equation}
where the components are derived purely via $\mathcal{O}(n^{2})$ matrix multiplications:
\begin{align}
    T_{1} &= (W \circ (W(A \circ B)))\mathbf{1}, \label{eq:T1_app} \\
    T_{2} &= ((W \circ (WA))\mathbf{1}) \circ ((W \circ (WB))\mathbf{1}), \label{eq:T2_app} \\
    T_{3} &= (W \circ (WA) \circ (WB))\mathbf{1}. \label{eq:T3_app}
\end{align}
\end{theorem}

\begin{proof}
Consider a fixed reference point $Z_{i}$ corresponding to the $i$-th row of the weight matrix, denoted as the weight vector $w^{(i)} \in \mathbb{R}^{n}$. Following the V-statistic formulation \citep{wang2015conditional}, the exact sample conditional distance covariance at $Z_{i}$ can be expanded into three components: $\mathcal{V}_{XY}^{(i)} = D_{1}^{(i)} + D_{2}^{(i)} - 2D_{3}^{(i)}$, where:
\begin{align}
    D_{1}^{(i)} &= \sum_{k=1}^{n} \sum_{l=1}^{n} w_{k}^{(i)} w_{l}^{(i)} A_{kl} B_{kl}, \\
    D_{2}^{(i)} &= \left(\sum_{k=1}^{n} \sum_{l=1}^{n} w_{k}^{(i)} w_{l}^{(i)} A_{kl}\right) \left(\sum_{k=1}^{n} \sum_{l=1}^{n} w_{k}^{(i)} w_{l}^{(i)} B_{kl}\right), \\
    D_{3}^{(i)} &= \sum_{k=1}^{n} \sum_{l=1}^{n} \sum_{m=1}^{n} w_{k}^{(i)} w_{l}^{(i)} w_{m}^{(i)} A_{kl} B_{km}.
\end{align}

To vectorize this over all $i \in \{1 \dots n\}$ without $\mathcal{O}(n^{3})$ memory, we rely on the Diagonal Extraction Identity for quadratic forms. For any matrix $S \in \mathbb{R}^{n \times n}$, the diagonal of $W S W^{\top}$ computes the weighted sum $w^{(i)} S w^{(i)\top}$ for all rows $i$. We can extract this diagonal exactly without computing the off-diagonal elements via:
\begin{equation}
    \text{diag}(W S W^{\top}) = (W \circ (WS))\mathbf{1}. \label{eq:diag_identity}
\end{equation}

We systematically apply this identity to factorize the three V-statistic components:

\textbf{1. The Joint Distance Term ($T_{1}$):} \\
Let $S = A \circ B$. For a single reference point $i$, $D_{1}^{(i)} = w^{(i)} S w^{(i)\top}$. Using the identity from Eq. \ref{eq:diag_identity}, we compute this for all $n$ points simultaneously:
\begin{equation}
    T_{1} = (W \circ (W(A \circ B)))\mathbf{1}.
\end{equation}

\textbf{2. The Grand Mean Term ($T_{2}$):} \\
The term $D_{2}^{(i)}$ is the product of the weighted grand means of $X$ and $Y$. The grand mean for $X$ at reference point $i$ is exactly $w^{(i)} A w^{(i)\top}$. Applying Eq. \ref{eq:diag_identity}, the vector of all grand means for $X$ is $g^{X} = (W \circ (WA))\mathbf{1}$. Thus, the vectorized component $T_{2}$ is simply the Hadamard product of the grand mean vectors:
\begin{equation}
    T_{2} = g^{X} \circ g^{Y}.
\end{equation}

\textbf{3. The Cross Term ($T_{3}$):} \\
To compute $D_{3}^{(i)}$, we first factor the summation to isolate the local row means. Let $M^{X} = WA$ and $M^{Y} = WB$ be the matrices of local row means, such that $M_{ik}^{X} = \sum_{l=1}^{n} W_{il} A_{kl}$ (since $A$ is symmetric). We can rewrite $D_{3}^{(i)}$ as:
\begin{equation}
    D_{3}^{(i)} = \sum_{k=1}^{n} W_{ik} \left(\sum_{l=1}^{n} W_{il} A_{kl}\right) \left(\sum_{m=1}^{n} W_{im} B_{km}\right) = \sum_{k=1}^{n} W_{ik} M_{ik}^{X} M_{ik}^{Y}.
\end{equation}
Vectorizing this row-wise dot product for all $n$ points yields:
\begin{equation}
    T_{3} = (W \circ M^{X} \circ M^{Y})\mathbf{1}.
\end{equation}
By substituting $T_{1}$, $T_{2}$, and $T_{3}$, we perfectly reconstruct $\mathcal{V}_{XY} = T_{1} + T_{2} - 2T_{3}$.

\textbf{Complexity Analysis:} The naïve execution of the double-centered matrix expansions (as required by the exact definition of distance correlation) necessitates intermediate storage of size $\mathcal{O}(c \cdot n^{2})$ for $c$ reference points. For a fully global estimator ($c=n$), this scales as $\mathcal{O}(n^{3})$, which causes Out-Of-Memory errors on modern GPUs for standard batch sizes. The sDISCO factorization avoids this entirely. The densest operations are standard matrix multiplications of size $(n \times n) \times (n \times n)$, such as $WA$ and $W(A \circ B)$. The Hadamard product acts as a computational filter, reducing the implicit tensor contraction to a row-wise summation before any third-order dimension is ever instantiated. Consequently, sDISCO leverages the $\mathcal{O}(n^{3})$ compute capacity of highly optimized GPU Tensor Cores while remaining strictly bounded by an $\mathcal{O}(n^{2})$ memory footprint.
\end{proof}

\newpage
\section{Experiments}
\label{app:experiments}

\subsection{Datasets}
\label{app_sub:datasets}

\subsubsection{Custom dSprites}
\label{app:dsprites}
We implemented our own version of the dSprites dataset \citep{dsprites17} that allows for full control of the shapes. We control the shape, orientation, x/y-positions, and the scale of objects placed within a 2D grid.

\begin{figure}[h]
\centering
  \includegraphics[scale=0.7]{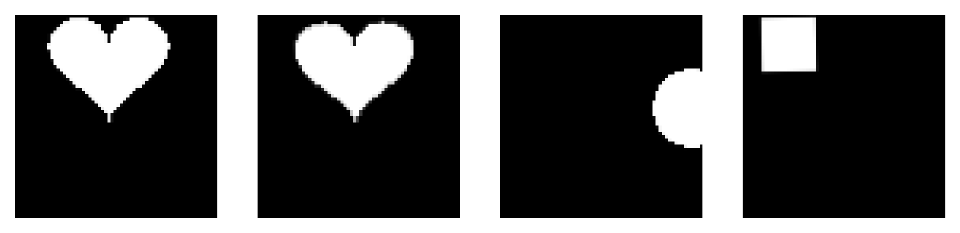}
  \caption{Custom dSprites examples.}
\end{figure}%

Based on \citet{dsprites17}, we construct images of geometric shapes (Sh) with different orientation (O), scale (Sc), y-position (Y), and x-position (X). The regression target, Y, nonlinearly determines the object's y-position while X acts as a confounder. A biased predictor will use both spatial dimensions, while a robust predictor will solely/mainly rely on the y-axis. The SCM is:

\[
\begin{aligned}
&U_x\sim\mathrm{Uniform}\!\left(0,\tfrac{\pi}{2}\right),\quad
  U_y\sim\mathcal{N}\!\left(0,0.15^2\right),\quad \\
&U_{sc}\sim\mathrm{Uniform}(0.5,0.7),\quad
  U_\theta\sim\mathrm{Uniform}(0,360^\circ),\quad \\
&U_{\mathrm{shape}}\sim\mathrm{UniformDiscrete}\{\mathrm{square},\mathrm{ellipse},\mathrm{heart}\},\\
&\varepsilon_x\sim\mathcal{N}\!\left(0,0.01^2\right),\quad
  \varepsilon_{y1}\sim\mathcal{N}\!\left(0,0.1^2\right),\quad \\
&\varepsilon_{y2}\sim\mathcal{N}\!\left(0,0.2^2\right), \quad x = \sin(U_x),  \quad
y = x^2+U_y, \quad \\
&\mathrm{Im} := f\!\bigl(x+\varepsilon_x,\;\exp(y+\varepsilon_{y1})+\varepsilon_{y2}, U_{scale}, U_{shape}, U_\theta\bigr)
\end{aligned}
\]

\subsubsection{Blob Dataset}
\label{app:blob}
Inspired by \citet{adeli2021representation}, we generate synthetic images with two Gaussian blobs. We control the intensities of the blobs, and bias their relationships as described below.

\begin{figure}[h]
\centering
  \includegraphics[scale=0.7]{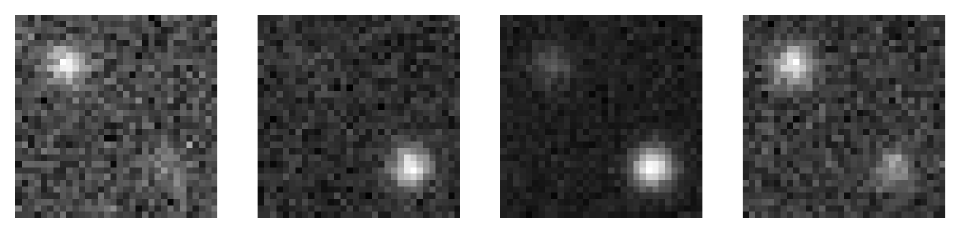}
  \caption{Blob dataset examples.}
\end{figure}%

One blob's intensity is causally related to the regression target, causal intensity (CI), while the other acts as a spurious mediator, bias intensity (BI). A bias-free predictor will only rely on the upper left blob, while a bias predictor will base its predictions on both blobs. The equations of the simulator are:
\[
\begin{aligned}
&U_{\text{causal}} \sim \mathrm{Unif}(0,1), \quad
U_{\text{bias}} \sim \mathcal{N}(0,0.1^2), \quad \\
&\varepsilon_{\text{causal}} \sim \mathcal{N}(0,0.1^2), \quad CI := U_{\text{causal}}, \quad \\
&BI := CI + U_{\text{bias}}, \quad \\
&Im := f\!\left(\exp(\text{CI}+\varepsilon_{\text{causal}}), \exp(\text{BI})\right).
\end{aligned}
\]

\subsubsection{YaleB.}
\label{app:yaleb}
The extended YaleB dataset \citep{yaleB,extYaleB} gives us images of faces in different poses. Additionally, these images have a single light source that is varied across different combinations. We utilize the azimuth and elevation of the light source as a bias as described in the main body of this manuscript.

\begin{figure}[ht]
\centering
  \includegraphics[scale=0.7]{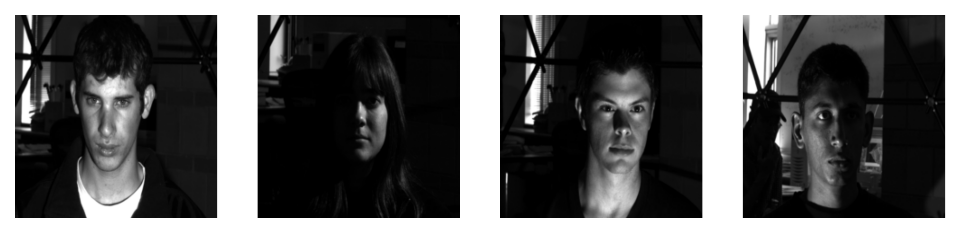}
  \caption{Extended yaleB examples.}
\end{figure}%

From the extended YaleB dataset \citep{yaleB,extYaleB}, we predict face pose (collapsed into three categories: frontal, slightly left, maximally left). Azimuth (Az) and elevation (El) of the light source serve as continuous bias variables. To introduce realistic bias settings in YaleB, we used selection bias to correlate the target labels with lighting directions:

\[
\begin{aligned}
&X = (\text{azimuth}, \text{elevation}), \quad v = (1, 1)^\top \\
&s = v^\top \,\text{Standardize}(X), \\
&z = \text{QuantilePartition}(s, 3) \in \{0,1,2\}, \quad \\ 
&\Pr(S=1 \mid z,\text{pose}) \;=\;
\begin{cases}
1.0 & \text{if } \text{pose} = z, \\
0.05 & \text{if } \text{pose} \neq z.
\end{cases}
\end{aligned}
\]

\subsubsection{FairFace}
\label{app:fairface}
The FairFace dataset \citep{karkkainen2021fairface} provides a range of demographic annotations that can be leveraged to study and mitigate bias. In our experiments, we focus on sex and skin tone as salient attributes. Since FairFace does not provide an explicit skin tone annotation, we inferred it by reinterpreting the dataset’s “race” attribute, specifically the “Black” label, which in practice corresponds most consistently to darker skin tones. Importantly, we do not endorse the conceptualization of “race” as used in FairFace: the category is ontologically unstable, lacks scientific grounding, and reproduces historically problematic constructs.

\begin{figure}[ht]
\centering
  \includegraphics[scale=0.7]{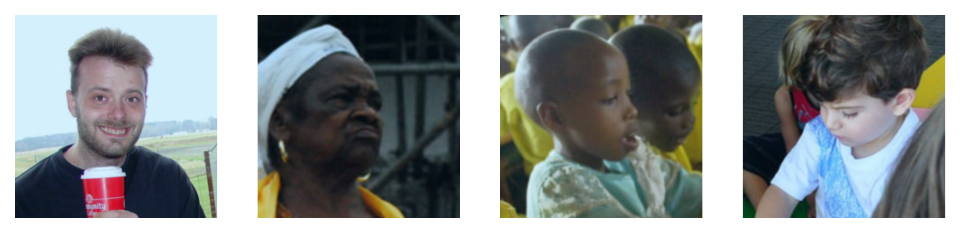}
  \caption{FairFace examples.}
\end{figure}%

We use FairFace \citep{karkkainen2021fairface} to predict sex as the target. Skin tone (light vs.\ dark) acts as a bias through selection bias. If formalized as a SCM, the process would follow:
\[
P(S=1 \mid Y, B) =
\begin{cases}
0.9 & \text{if } Y = B \ (\text{aligned}), \\
0.1 & \text{if } Y \neq B \ (\text{misaligned}).
\end{cases}
\]

\subsubsection{Waterbirds}
\label{app:waterbirds}
We adopt the Waterbirds dataset \citep{sagawa2019distributionally}, constructed by overlaying birds \citep{welinder2010caltech,wah2011caltech} on land or water backgrounds \citep{zhou2017places}.

\begin{figure}[h!]
\centering
  \includegraphics[scale=0.7]{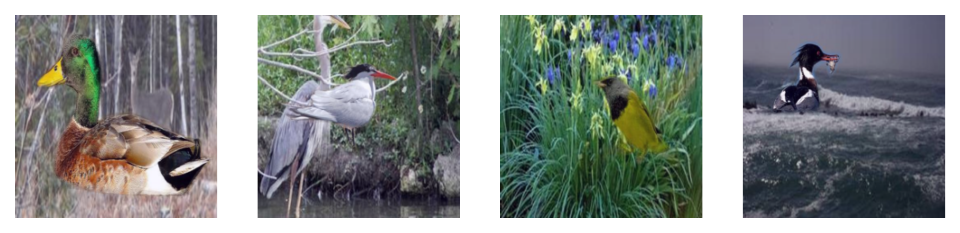}
  \caption{Waterbirds examples.}
\end{figure}%

For Waterbirds, we have again access to the data generating process. The recipe is to select the backgrounds for each bird, to introduce a heavy shortcut. The background is a mediator variable, and the formal SCM reads:
\begin{align*}
&bird \sim \mathrm{Bern}(0.5), \qquad \\
&background \mid bird \sim \mathrm{Bern}\big(0.9 \cdot bird + 0.1 \cdot (1-bird)\big)
\end{align*}

\subsubsection{MNLI}
\label{app:sub_mnli}

\paragraph{Dataset and Experimental Setup.} We use the MNLI dataset \citep{williams2018broad}, adopting the exact bias mitigation setup described by \citet{sagawa2019distributionally}. The task is to predict the $\text{relationship} \in \{entailment,neutral,contradiction\}$ between a sentence pair, where negation terms (e.g., 'nobody', 'no', 'never', 'nothing') act as spurious features highly correlated with the target label.

Unlike our other datasets (Waterbirds, dSprites, Blob, FairFace, YaleB), which offered enough control or balance to construct unbiased validation and test sets, the MNLI base dataset is heavily biased. Since creating a strictly unbiased test set would require discarding excessive data, we follow the protocol of \cite{sagawa2019distributionally}: we report Worst Group Accuracy (WGA) and perform model selection based on this metric. Note that WGA is only applicable to datasets where both targets and bias attributes are \textbf{categorical}.

\subsection{Bias Mitigation Methods}
\label{app_sub:baselines}
All of the baseline methods apply different ideas and techniques, but need a backbone architecture. We used the same architecture, depending on the dataset, for all bias mitigation methods. We utilized the classical ResNet \citep{he2016deep} in all experiments. To be more detailed, we used the ResNet-18 variant implemented in the torchvision library \citep{torchvision2016}, pretrained on ImageNet (IMAGENET1K\_V1)\citep{deng2009imagenet}, for dSprites, yaleB, Waterbirds, and FairFace. For the Blob dataset, which deals with smaller images, we used our own implementation of the ResNet architecture that uses group norm (GN) instead of batch norm, as GN was more stable in our experiments. We also changed the depth to two residual blocks, and edited the maxpooling logic to fit the requirements for this small resolution dataset.

For MNLI, we used the original TinyBERT of \citet{jiao2020tinybert} and directly used their officially pre-trained version from HuggingFace at \url{https://huggingface.co/huawei-noah/TinyBERT_General_4L_312D}.

All details can be found in our repository.

\subsubsection{GDRO}
We implemented GDRO exactly following the code and paper by \citet{sagawa2019distributionally}. GDRO is one of the standard methods when it comes to bias mitigation as it directly builds on the DRO method that can be seen as a defacto standard in domain generalization \citep{ben2002robust}.

\subsubsection{Adversarial}
There are many different suggestions how to design adversarial networks for domain generalization or bias mitigation \citep{ganin2016domain,wang2019balanced,adeli2021representation}. But most of them differ in some details. The adversarial network, as all other methods in this paper as well, uses the ResNet backbone. Following the ResNet activations, we attach different heads. One head will always be responsible for the task prediction, thus predicting the target $Y$. Then, for every bias variable present, a new head will be introduced. The optimization procedure is as follows:
\begin{itemize}
    \item predict the task. Update ResNet and the task head parameters.
    \item for each bias, predict it. Update only the bias prediction heads (protected attribute step).
    \item for each bias, predict it. Use the negative gradient of the loss and update only the ResNet (unlearn step).
\end{itemize}

There are many different ways to parametrize the optimization procedure for this adversarial setting. Some suggest using different learning rates for each of the network parts, including each of the bias heads. We, instead, use a single learning rate for the entire network with all of its parts. To adjust the weighting of the different network parts, we use $\lambda_{protected}$ as the parameter that controls the learning weight for the task head. All bias heads use the same weighting parameter $\lambda_{unlearn}$. Instead of using different learning rates, this loss weighting reduces to an equivalent formulation, while being more comparable to the other, regularzation-based methods. See Tab. \ref{tab:hyperparameters} for the search space.

\subsubsection{c-MMD}
We implemented c-MMD, as formulated similarly by \citet{makar2022causally,kaur2022modeling,veitch2021counterfactual}. It tries to penalize predictions or latent representations based on conditional MMD. Besides the penalty strength $\lambda$, it introdcues a single bandwidth (bw) hyperparameter that controls the (Gaussian) kernel's bandwidth on the network prediction outputs. See Tab. \ref{tab:hyperparameters} for the search space.

\subsubsection{HSCIC}
This method is an direct extension of the classical HSIC measure \citep{gretton2005measuring,gretton2007kernel} to the conditional setting. \citet{quinzan2022learning} first used it in their work, but their implementation is extremely inefficient. We directly adapted the efficient implementation given by \citet{pogodin2022efficient}.

This method is the most expensive one, when it comes to hyperparameters. It introduces 5 hyperparameters. We did a grid search over 4 of them, as this space was already immense, compared to the others, and held one of them fixed, based on initial results. The 4 hyperparameters we searched over is the regularization strength, $\lambda$, and three bandwidth parameters, $\sigma$, for each of the involved variables in the conditional independence criterion. It further has another parameter, the ridge regression regularization parameter, $\lambda_{ridge}$, we which held fixed at 0.01, following similar values as in \citet{pogodin2022efficient,quinzan2022learning}, and also ensuring in preliminary runs that this value is advantageous for the method in our settings. See Tab. \ref{tab:hyperparameters} for the search space.

\subsubsection{CIRCE}
CIRCE \citep{pogodin2022efficient} is a direct extension of the HSCIC method. It was published as a direct successor or competitor. CIRCE, in theory, has a similar set of hyperparameters. But it reduces the search space by doing a small, pre-training  hyperparameter optimization that does not depend on the neural network, in advance. See Tab. \ref{tab:hyperparameters} for the search space.

\subsubsection{DISCO}
Our methods, DISCO$_m$ and sDISCO, introduce the regularization strength $\lambda$ and the bandwidth (bw) parameter for the Gaussian kernel. See Tab. \ref{tab:hyperparameters} for the search space.

\subsubsection{IRM}
Invariant Risk Minimization (IRM) \citep{arjovsky2019invariant} is a prominent approach from the domain generalization literature designed to learn representations that elicit invariant optimal predictors across multiple environments. In our bias mitigation framework, however, we do not have predefined environments in the traditional sense, and thus we do not perform environment-stratified sampling. Instead, we use standard random batch sampling, consistent with all other baseline methods, and adapt the IRM penalty to operate on a group-based logic, treating the combinations of targets and biases as groups (similar to the GDRO setup). We compute the IRM penalty across these discrete groups rather than distinct training environments. See Tab. \ref{tab:hyperparameters} for the search space, which includes the penalty weight $\lambda$ and a penalty annealing/warm-up period.

\subsubsection{Fishr}
Fishr \citep{rame2022fishr} is another method rooted in domain generalization that seeks to align the variance of gradients across different environments to promote out-of-distribution generalization. Similar to IRM, Fishr natively assumes the availability of discrete training environments. Because our setup evaluates models on strictly biased datasets without explicit environment partitions, we bypass environment-balanced sampling and maintain standard random sampling. We adapt Fishr's core logic to our framework by computing and regularizing the gradient variances across our defined demographic or target-bias groups instead. See Tab. \ref{tab:hyperparameters} for the search space, which similarly includes the regularization strength $\lambda$ and the warm-up epochs.

\subsubsection{Hyperparameters}

\begin{table}[h!]
\centering
\caption{Hyperparameter grids searched for each method, along with the total number of runs evaluated.}
\resizebox{\linewidth}{!}{
\begin{tabular}{lll}
\toprule
\textbf{Method} & \textbf{Hyperparameter Grid} & \textbf{Runs} \\
\midrule
Adversarial Network &
\begin{tabular}[c]{@{}l@{}}
$\lambda_{\text{protected}} \in \{10.0, 5.0, 2.0, 1.0, 0.5, 0.1\}$ \\
$\lambda_{\text{unlearn}} \in \{10.0, 5.0, 2.0, 1.0, 0.5, 0.1\}$ 
\end{tabular} & $6 \times 6 = 36$ \\
\midrule
DISCO (both) &
\begin{tabular}[c]{@{}l@{}}
$\text{bw} \in \{1.0, 0.9, 0.5, 0.1, 0.01, 0.001\}$ \\
$\lambda \in \{10.0, 5.0, 2.0, 1.0, 0.5, 0.1\}$
\end{tabular} & $6 \times 6 = 36$ \\
\midrule
CIRCE &
\begin{tabular}[c]{@{}l@{}}
$\lambda \in \{1000.0, 100.0, 10.0, 1.0, 0.1\}$ \\
$\sigma^2_{\hat{y}} \in \{0.9, 0.5, 0.1, 0.01, 0.001\}$ \\
$\sigma^2_{z} \in \{0.9, 0.5, 0.1, 0.01, 0.001\}$
\end{tabular} & $5 \times 5 \times 5 = 125$ \\
\midrule
c-MMD &
\begin{tabular}[c]{@{}l@{}}
$\text{bw} \in \{1.0, 0.9, 0.5, 0.1, 0.01, 0.001, 0.0001, 0.00001\}$ \\
$\lambda \in \{1000.0, 100.0, 10.0, 1.0, 0.5, 0.1\}$
\end{tabular} & $8 \times 6 = 48$ \\
\midrule
HSCIC &
\begin{tabular}[c]{@{}l@{}}
$\lambda \in \{1000.0, 100.0, 10.0\}$ \\
$\sigma^2_{\hat{y}} \in \{0.5, 0.1, 0.01, 0.001\}$ \\
$\sigma^2_{z} \in \{0.1, 0.01, 0.001\}$ \\
$\sigma^2_{y} \in \{0.1, 0.01, 0.001\}$
\end{tabular} & $3 \times 4 \times 3 \times 3 = 108$ \\
\midrule
IRM &
\begin{tabular}[c]{@{}l@{}}
$\lambda \in \{1.0, 2.0, 10.0, 100.0, 500.0, 1000.0, 5000.0, 10000.0, 50000.0, 100000.0\}$ \\
$\text{warm-up epochs} \in \{0, 10, 30, 40\}$ \\
\end{tabular} & $10 \times4 = 40$ \\
\midrule
Fishr &
\begin{tabular}[c]{@{}l@{}}
$\lambda \in \{1.0, 2.0, 10.0, 100.0, 500.0, 1000.0, 5000.0, 10000.0, 50000.0, 100000.0\}$ \\
$\text{warm-up epochs} \in \{0, 10, 30, 40\}$ \\
\end{tabular} & $10 \times4 = 40$ \\
\bottomrule
\end{tabular}}
\label{tab:hyperparameters}
\end{table}

\subsection{Path Analysis Counterfactuals}
\label{app_sub:path_analysis}

As a recap, for this experiment, scale is our target attribute, and we predict it in a binary manner (large vs. small scale). The $X$ and $Y$ positions of the object itself work as biases we want to mitigate.

In the manuscript, we quantified model sensitivity to counterfactual changes of a bias variable $X$ via the following measure:
\[
S_X(\theta) := \mathbb{E}_{u \sim U}\, \mathbb{E}_{x \sim \mathrm{Unif}(\mathrm{supp}(X))} \left[ \,\big| P(\hat{Y}(u);\theta_m) - P(\hat{Y}_x(u);\theta_m) \big| \,\right],
\]
and analogously defined $S_Y(\theta)$ for the variable $Y$.

In addition, we further introduced counterfactual accuracy,
\[
Acc_{ctf} := \mathbb{E}_{u \sim U}\, \mathbb{E}_{s \sim \mathrm{Unif}(\mathrm{supp}(Scale))} \left[ \mathbf{1}\big\{Y(u) = \hat{Y}_s(u)\big\}\right],
\]
which evaluates whether predictions remain consistent with ground-truth outcomes under counterfactual changes of the causally relevant target variable Scale.

To visually understand what these formulas measure, we fix a single unit $u \in U$. The following Fig. \ref{fig:counterfactuals} shows what some of the different counterfactuals look like for this given unit $u$.

\begin{figure}[ht]
    \centering
        \begin{subfigure}{0.7\textwidth}
        \centering
        \includegraphics[width=\linewidth]{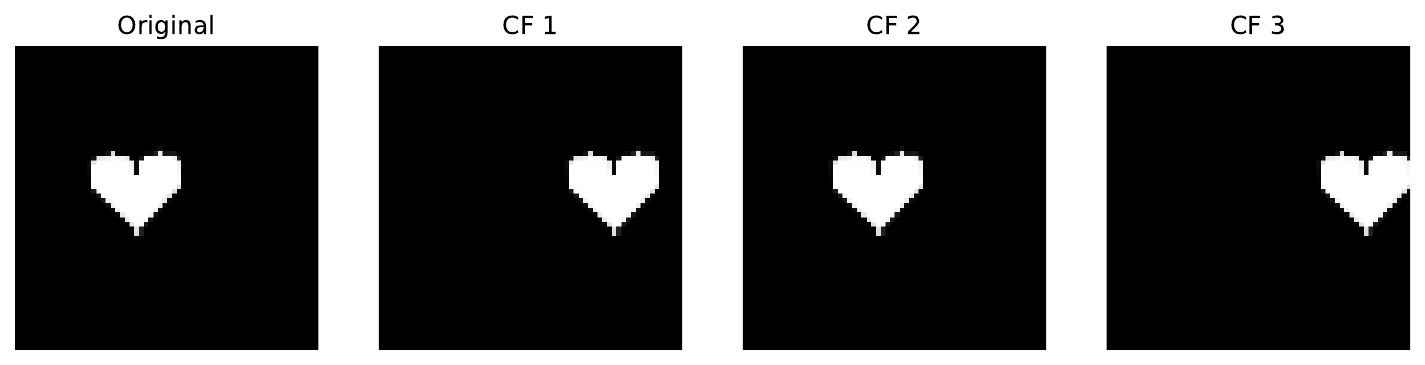}
        \caption{3 Counterfactuals, varying X-position.}
    \end{subfigure}
    \hfill
    \begin{subfigure}{0.7\textwidth}
        \centering
        \includegraphics[width=\linewidth]{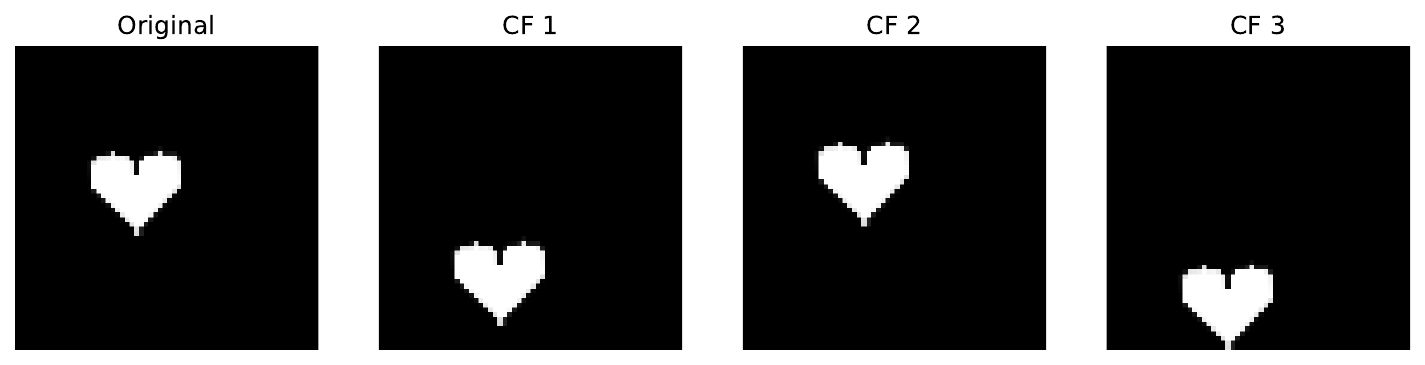}
        \caption{3 Counterfactuals, varying Y-position.}
    \end{subfigure}
    \hfill
    \begin{subfigure}{0.7\textwidth}
        \centering
        \includegraphics[width=1.0\linewidth]{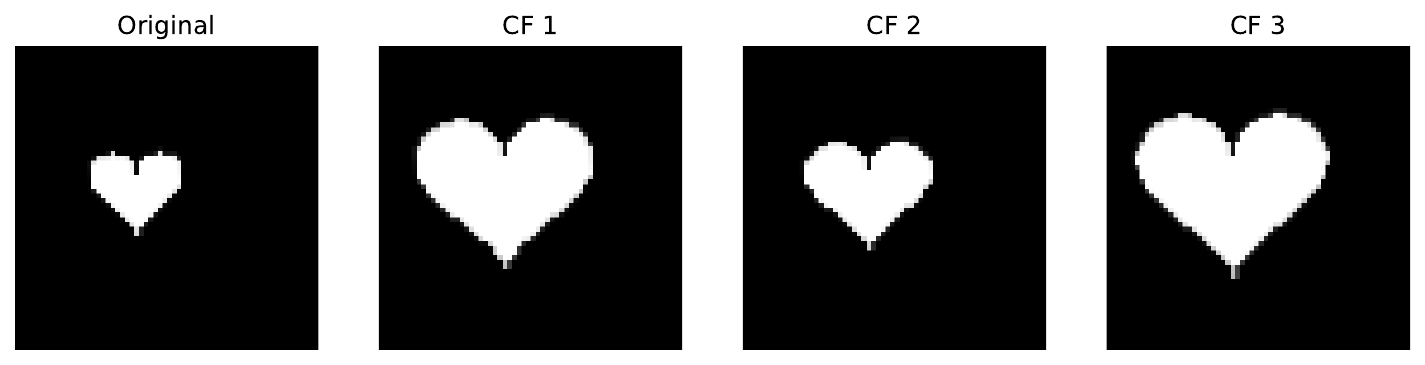}
        \caption{3 Counterfactuals, varying Scale.}
    \end{subfigure}
    \caption{Counterfactual images for a given unit $u$, labeled here as “Original”.}
    \label{fig:counterfactuals}
\end{figure}

\subsection{Counterfactual Analysis on dSprites}
\label{app_sub:ctf_dsprites_main}
To further illustrate the benefits of counterfactual analysis, we consider the same dSprites setup as in the main manuscript (Sec.~\ref{sec:experiments}). Specifically, we evaluate one of the already hyperparameter-optimized, best-performing variants for each method.  

In the main experiments, we followed standard practice in the domain generalization literature and reported results on bias-balanced validation and test sets. While this setting is appropriate for assessing overall causal stability in a single balanced setting, it does not necessarily reveal \emph{why} a model fails. In particular, causal stability is not the only goal. While a model might be stable to counterfactual changes of the bias attribute, it might do so by also reducing overall predictive performance.

Counterfactual analysis, grounded in the causal graph formalized through SAM, provides these exact complementary insights. To recall the setup from the Section \ref{sec:experiments}: the target variable $Y$ is the vertical position (y-coordinate) of the object in the 2D plane, while the bias attribute is the horizontal position (x-coordinate), denoted as $X$.  

Analogous to pathway analysis, we define
\[
S_X(\theta) := \mathbb{E}_{u \sim U}\, \mathbb{E}_{x \sim \mathrm{Unif}(\mathrm{supp}(X))} \left[ \,\big|\hat{Y}(u) - \hat{Y}_x(u) \big| \,\right],
\]
which measures the path-specific sensitivity of the model output with respect to the bias attribute $X$.  

Similarly, in analogy to counterfactual accuracy, we introduce the \emph{counterfactual $R^2$ score}:
\[
R^2_{\mathrm{ctf}} := 1 - \frac{\mathbb{E}_{u \sim U}\, \mathbb{E}_{y \sim \mathrm{Unif}(\mathrm{supp}(\mathrm{Y}))} \left[ \big(y - \hat{Y}_y(u)\big)^2\right]}{\mathbb{E}_{y} \left[ \big(y - \bar{Y}(u)\big)^2 \right]},
\]
where $\bar{Y}$ denotes the mean of $Y$ over all counterfactually generated samples. This parallels the standard $R^2$ score, but is evaluated under counterfactual interventions on the $Y$ attribute.  

\begin{table}[h!]
    \centering
    \caption{Counterfactual sensitivity $S_X$ and counterfactual $R^2$ scores across methods on dSprites. Lower $S_X$ indicates reduced spurious dependence on the bias attribute $X$, while higher $R^2_{\mathrm{ctf}}$ indicates better counterfactual predictive performance.}
    \vspace{0.5em}
    \begin{tabular}{lcc}
        \toprule
        \textbf{Method} & \textbf{$S_X$} & \textbf{$R^2_{\mathrm{ctf}}$} \\
        \midrule
        sDISCO            & 0.058 & 0.7181 \\
        HSCIC             & 0.064 & 0.6021 \\
        CIRCE             & 0.064 & 0.5939 \\
        Adversarial       & 0.084 & 0.4812 \\
        \bottomrule
    \end{tabular}
    \label{tab:ctf_dsprites}
\end{table}

Since $Y$ has a standard deviation of 0.2 in the test set, we observe that all models manage to reduce dependence on the bias variable $X$, though to varying degrees. Recall from Tab.~\ref{tab:results} in the main text that sDISCO ranked above HSCIC and CIRCE in terms of balanced accuracy. While this metric reflects a combination of debiasing and predictive performance, it does not disentangle which factor contributed more.  

Counterfactual analysis provides this finer-grained perspective. Tab.~\ref{tab:ctf_dsprites} shows that sDISCO, HSCIC, and CIRCE all achieve comparable mitigation of bias effects, as indicated by similar $S_X$ values. However, their counterfactual $R^2$ scores reveal a different story: both HSCIC and CIRCE reduce dependence on $X$ at the cost of predictive accuracy on the task-relevant variable $Y$.  

In summary, while balanced accuracy on a bias-balanced test set is useful for reporting overall performance, counterfactual path analysis enabled by SAM allows us to better understand \emph{why} certain models fail. In this experiment, we see that HSCIC and CIRCE, although nearly matching sDISCO in debiasing effectiveness, sacrifice too much information from the true target variable $Y$.

\subsection{Runtime and Memory Analysis}
\label{app:run_time_analysis}

To provide a comprehensive view of the computational overhead and scalability of the evaluated bias mitigation methods, we conducted a runtime and memory analysis on the FairFace dataset, with results summarized in Tab. \ref{tab:timing_analysis}. All benchmarks were executed on a single NVIDIA Titan RTX GPU (24 GB VRAM) paired with an AMD Ryzen Threadripper 1920X CPU. We employed 8 parallel data-loading workers. The input pipeline processes standard RGB images resized to $224 \times 224$ pixels, applying standard training augmentations (random horizontal flips and random affine transformations) prior to normalization. The reported mean epoch times encompass the full forward and backward passes using a ResNet-18 backbone in standard 32-bit floating-point precision.

\begin{table}[h]
\centering
\caption{Mean epoch run time (in seconds) across varying batch sizes. Entries marked with \textit{OOM} indicate that the method failed due to an Out-Of-Memory error on a Titan RTX (24\,GB GPU). While the naive tensorized implementation of DISCO hits a memory bottleneck early, sDISCO successfully scales to large batch sizes, matching the speed of the most efficient baselines.}
\small
\setlength{\tabcolsep}{10pt}
\renewcommand{\arraystretch}{0.9}
\begin{tabular}{lcccc}
\toprule
\multirow{2}{*}{\textbf{Method}} & \multicolumn{4}{c}{\textbf{Batch Size}} \\
\cmidrule(lr){2-5}
& \textbf{64} & \textbf{256} & \textbf{768} & \textbf{1024} \\
\midrule
GDRO        & 21.46 & 20.94 & 25.03 & 29.38 \\
IRM         & 22.78 & 20.93 & 25.25 & 28.78 \\
Fishr       & 31.94 & 31.44 & OOM   & OOM   \\
\midrule
c-MMD       & 34.55 & 36.01 & OOM   & OOM   \\
CIRCE       & 20.62 & 21.01 & 25.54 & 28.33 \\
HSCIC       & 21.62 & 21.52 & 25.58 & 29.53 \\
\midrule
DISCO$_m$   & 21.95 & 22.69 & OOM   & OOM   \\
Naive DISCO & 36.49 & 38.46 & 42.17 & OOM   \\
sDISCO      & 21.75 & 22.61 & 25.36 & 29.06 \\
\bottomrule
\end{tabular}
\label{tab:timing_analysis}
\end{table}

\end{document}